\newtheorem{proposition}{Proposition}
\newtheorem{algorithm}{Algorithm}
\DeclareMathOperator{\E}{\mathbb{E}}
\DeclareMathOperator{\Cov}{cov}
\begin{document}

\newcommand{\ML}[1]{\textcolor{red}{[ML:#1]}}

\title{Rayleigh-Gauss-Newton optimization \\ with enhanced sampling for variational Monte Carlo}%

\author{Robert J. Webber}%
\affiliation{Courant Institute of Mathematical Sciences, New York University, New York 10012, USA}

\author{Michael Lindsey}%
\affiliation{Courant Institute of Mathematical Sciences, New York University, New York 10012, USA}

\begin{abstract}
Variational Monte Carlo (VMC) is an approach for computing ground-state wavefunctions that has recently become more powerful due to the introduction of neural network-based wavefunction parametrizations.
However, efficiently training neural wavefunctions to converge to an energy minimum remains a difficult problem. 
In this work,
we analyze optimization and sampling methods used in VMC and introduce alterations to improve their performance.
First, based on theoretical convergence analysis in a noiseless setting, we motivate a new optimizer that we call the Rayleigh-Gauss-Newton method, which can improve upon gradient descent and natural gradient descent to achieve superlinear convergence at no more than twice the computational cost.
Second, in order to realize this favorable comparison in the presence of stochastic noise, we analyze the effect of sampling error on VMC parameter updates and experimentally demonstrate that it can be reduced by the parallel tempering method.
In particular, we demonstrate that RGN can be made robust to energy spikes that occur when the sampler moves between metastable regions of configuration space.
Finally, putting theory into practice, we apply our enhanced optimization and sampling methods to the transverse-field Ising and XXZ models
on large lattices, yielding ground-state energy estimates with remarkably high accuracy after just 200 parameter updates.
\end{abstract}

\maketitle

\section{Introduction}
Computing the ground-state wavefunction of a many-body Hamiltonian operator is a demanding task, 
requiring the solution of an eigenvalue problem whose cost grows exponentially with system size
in traditional numerical approaches.
Variational Monte Carlo (VMC, \cite{gubernatis2016quantum,becca2017quantum}) is an alternative strategy that avoids this curse of dimensionality by using stochastic optimization to find the best wavefunction within a tractable function class.

VMC has recently seen rapid and encouraging development due to the incorporation of insights from the machine learning community.
In 2017, Carleo and Troyer \cite{carleo2017solving} applied VMC with a two-layer neural network ansatz to accurately represent the ground-state wavefunction
of quantum spin systems with as many as $100$ spins. 
Since then, there has been major progress in extending neural network-based VMC to the setting of electronic structure, including the development of the neural network backflow ansatz for second-quantized lattice problems \cite{luo2019backflow}, as well as of FermiNet \cite{pfau2020ab} and PauliNet \cite{hermann2020deep} for quantum chemistry problems in first quantization.
These new approaches have been extended to systems as large as bicyclobutane
($\mathrm{C}_4\mathrm{H}_6$), which has $30$ interacting electrons~\cite{pfau2020ab,spencer2020better}.

VMC is highly flexible, since it extends without significant 
modification to systems of arbitrary spatial dimension. 
However, the price paid for this flexibility is a difficult optimization problem that relies on Monte Carlo sampling.
Efficiently solving this optimization problem has proven challenging.
Recent works \cite{luo2019backflow,hermann2020deep,spencer2020better,sharir2020deep,yang2020deep,yang2020scalable} raise concerns about the speed and stability of wavefunction training and report that 
VMC can suffer from long training times \cite{pfau2020ab,spencer2020better}, lose stability \cite{yang2020scalable}, or converge to unreasonable solutions \cite{park2020geometry}.
Thus there is motivation for the development of faster and more stable optimization and sampling solutions.

Our goal is to apply numerical and probabilistic analysis to evaluate and improve upon the optimization and sampling strategies in VMC. 
To this end, we first provide a unified perspective on several major VMC optimizers, namely gradient descent, quantum
natural gradient descent (also known as stochastic reconfiguration), and the linear method.
Reviewing these methods in a unified way clarifies a path toward 
improvement.
Specifically, we introduce a new \emph{Rayleigh-Gauss-Newton} (RGN) method and prove RGN achieves superlinear convergence as the wavefunction approaches the ground state.

Next we analyze the Markov chain Monte Carlo (MCMC) sampling used in VMC.
We establish a quantitative extension of the zero-variance principle \cite{gubernatis2016quantum,becca2017quantum} of VMC that we call the vanishing-variance principle.
This principle guarantees that the energy estimates converge to the true energy as the wavefunction nears an eigenstate.
However, away from an eigenstate, the accuracy of the energy estimates is \emph{not} guaranteed.
The energy estimates can have a high variance and can even exhibit energy spikes (see Figure \ref{fig:tempering}).
To stabilize these energy estimates, variance reduction strategies are needed.
Using a standard MCMC sampler as in \cite{carleo2017solving}, the wavefunction is slow to recover from the energy spikes ($\sim 10^3$ iterations);
however, using the parallel tempering MCMC method
\cite{swendsen1986replica},
the recovery period is much quicker ($\sim 10^2$ iterations).
Variance reduction strategies such as parallel tempering can be essential for realizing the full potential of VMC in large-scale applications.

Lastly, by using the Rayleigh-Gauss-Newton method along with parallel tempering, we obtain highly accurate variational estimates for the ground-state energies of transverse-field Ising and XXZ models with as many as $400$ spins.
Compared to past benchmark results obtained using natural gradient descent \cite{carleo2017solving}, we obtain the same or higher accuracy in fewer iterations.
Since RGN is only slightly more expensive than natural gradient descent, by less than a factor of two in our tests, we conclude that RGN can improve the overall efficiency of VMC.

The rest of the paper is organized as follows.
Section \ref{sec:vmc} gives an overview of variational Monte Carlo,
Section \ref{sec:optimization} analyzes optimization methods,
Section \ref{sec:sampling} analyzes sampling methods,
Section \ref{sec:results} presents numerical experiments,
and Section \ref{sec:conclusion} concludes.

Throughout the paper, $\Re z$ denotes the real part of a complex number $z$.
$\bm{v}^T$, $\overline{\bm{v}}$, and $\bm{v}^{\ast}$ denote the transpose, complex conjugate, and conjugate transpose of a vector $\bm{v}$,
and similar conventions are adopted for matrices.
We use single bars $\left| \cdot \right|$ for the Euclidean norm of a scalar, vector, or matrix
and $\left\lVert \cdot \right\rVert_2$ for the spectral norm of a matrix.
Lastly, we consider a 
finite- or infinite-dimensional
Hilbert space
of unnormalized wavefunctions $\psi$
and use $\left<\cdot, \cdot\right>$ and $\left\lVert \cdot \right\rVert$ to denote the associated inner product and norm.

\section{Overview of VMC}{\label{sec:vmc}}

The main goal of variational Monte Carlo (VMC) is the identification of the ground-state energy and wavefunction of the Hamiltonian operator $\mathcal{H}$
for a quantum many-body system.
We denote the ground-state energy and wavefunction using $\lambda_0$ and $\psi_0$, respectively.
In addition to solving the eigenvalue equation $\mathcal{H} \psi_0 = \lambda_0 \psi_0$,
these admit a variational characterization in terms of the energy functional
\begin{equation}
\mathcal{E}\left[\psi\right] 
= \frac{\left<\psi, \mathcal{H} \psi\right>}
{\left<\psi, \psi\right>}.
\end{equation}
The ground-state energy $\lambda_0$ is the minimum value of $\mathcal{E}$,
and the ground-state wavefunction $\psi_0$ is the minimizer, which we assume to be unique up to an arbitrary multiplicative constant.

Identifying $\lambda_0$ and $\psi_0$ becomes
difficult when the Hilbert space associated with $\mathcal{H}$ is high-dimensional
or infinite-dimensional.
For example, in the Heisenberg model for spin-$1 \slash 2$ particles on a graph \cite{Sachdev_2009}, $\mathcal{H}$ is the operator
\begin{equation}
	\mathcal{H} = \sum_{i \sim j} [J_x \sigma_i^x \sigma_j^x 
	+ J_y \sigma_i^y \sigma_j^y
	+ J_z \sigma_i^z \sigma_j^z]
	+ h \sum_i \sigma_i^x,
\end{equation}
where $\sigma_i^x$, $\sigma_i^y$, and $\sigma_i^z$ are Pauli operators
for the $i$-th spin,
$i \sim j$ signifies that $i$ and $j$ are neighboring spins, and $J_x$, $J_y$, $J_z$, and $h$ are real-valued parameters.
In the case, e.g., of a $10 \times 10$ square lattice,
the ground-state wavefunction can be viewed as a vector of length $2^{100}$, 
which is far too large to store in memory,
much less calculate with any conventional eigensolver, direct or iterative.

VMC must approximate this high-dimensional eigenvector
using a tractable parametrization
$\psi = \psi_{\bm{\theta}}$, 
where $\bm{\theta}$ is a vector of real- or complex-valued parameters.
VMC uses an iterative approach for updating the $\bm{\theta}$ parameters, with the goal of minimizing the energy within the parametric class.
VMC iterates over the following three steps.
\begin{enumerate}[itemsep=2pt,parsep=2pt]
	\item Draw random samples from the wavefunction density
	$\rho_{\bm{\theta}} = \left|\psi_{\bm{\theta}}\right|^2 \slash \left<\psi_{\bm{\theta}}, \psi_{\bm{\theta}}\right>$.
	\item Use the random samples to estimate the 
	energy $\mathcal{E}\left[\psi_{\bm{\theta}}\right]$,
	the energy gradient $\nabla_{\bm{\theta}} \mathcal{E}\left[\psi_{\bm{\theta}}\right]$,
	and possibly other quantities needed for the optimization.
	\item Update the $\bm{\theta}$ parameters to reduce the energy.
\end{enumerate}

In VMC,
we ideally find that
the estimated energies fall quickly in the first iterations
and decrease more slowly at subsequent iterations,
yielding increasingly accurate estimates of $\lambda_0$, as shown in Figure \ref{fig:mc_error}.

\begin{figure}[h!]
    \centering
    \includegraphics[scale=.35]{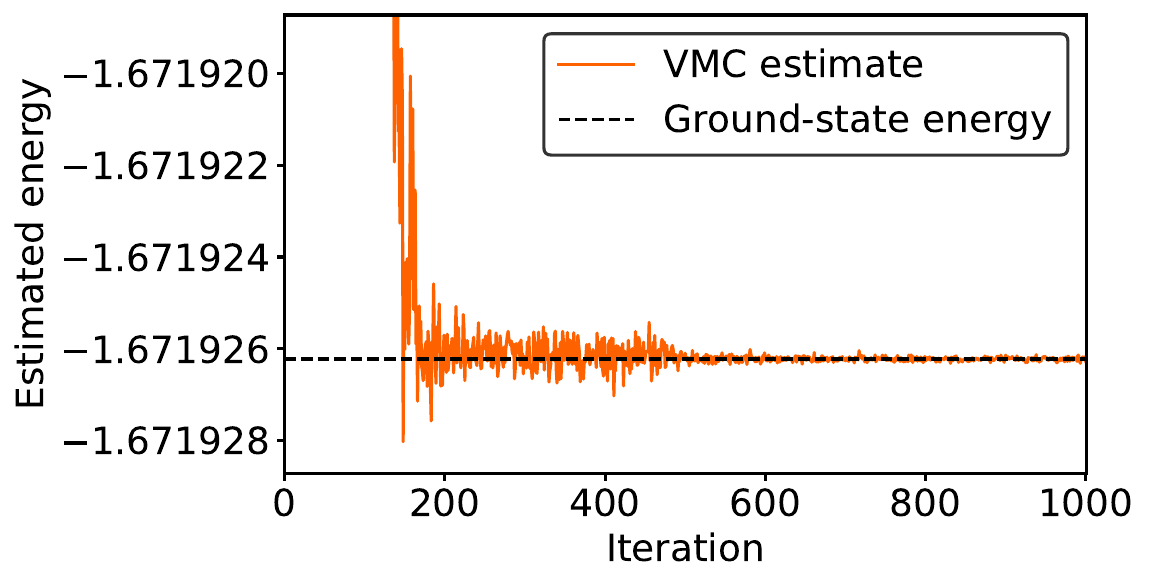}
    \caption{VMC ground-state energy estimates
    for a $200 \times 1$ Ising model with a transverse magnetic field ($h = 1.5$).
    Computational details are provided in Section \ref{sec:results}.}
    \label{fig:mc_error}
\end{figure}

Additionally, as seen in Figure \ref{fig:mc_error},
there is a \emph{vanishing-variance principle} by which the energy estimator's variance converges to zero
as the wavefunction approaches the ground state of $\mathcal{H}$ 
(see Proposition \ref{prop:zero_variance}).
Because of this principle, 
reductions in the energy mean and reductions in the energy variance both indicate that the wavefunction is approaching the ground state.
The vanishing-variance principle is essential in applications, since it enables VMC to provide accurate energy estimates even though the variance at the early stages of the optimization would appear to render such high accuracy impossible.

\section{Optimization approaches}{\label{sec:optimization}}

In this section, we obtain formulas for the energy gradient and Hessian, 
use these formulas to motivate optimization methods for VMC,
and lastly derive theoretical convergence rates for VMC optimizers.
Throughout the section, we assume that optimization methods are applied exactly without any Monte Carlo sampling.

\subsection{The energy gradient and Hessian}
\label{sub:derivations}

To begin, we derive formulas for the energy gradient and Hessian with respect to the parameters. 
By adopting the convention of intermediate normalization \cite{szabados2016perturbation}, we obtain compact expressions for these quantities that differ from past presentations, e.g., \cite[ch. 9]{gubernatis2016quantum}.

We fix a vector of parameters $\bm{\theta}$ and consider a small parameter update $\bm{\theta} + \bm{\delta}$.
The resulting wavefunction, after
intermediate normalization, is written 
\begin{equation}
\widehat{\psi}_{\bm{\theta} + \bm{\delta}}
= \frac{\left<
	\psi_{\bm{\theta}}, 
	\psi_{\bm{\theta}}
	\right>}
{\left<\psi_{\bm{\theta}}, 
	\psi_{\bm{\theta} + \bm{\delta}}\right>} \psi_{\bm{\theta} + \bm{\delta}}.
\end{equation}
This intermediate-normalized wavefunction
is a scalar multiple of
the unnormalized wavefunction $\psi_{\bm{\theta} + \bm{\delta}}$
and hence has the same energy.
However, $\widehat{\psi}_{\bm{\theta} + \bm{\delta}}$
has been rescaled to fix the inner product with $\psi_{\bm{\theta}}$.

We assume that $\bm{\delta} \mapsto \widehat{\psi}_{\bm{\theta} + \bm{\delta}}$ is a locally analytic function of real or complex parameters and consider the second-order Taylor series expansion
\begin{equation}
\label{eq:taylor}
	\widehat{\psi}_{\bm{\theta} + \bm{\delta}} = 
	\widehat{\psi}
	+ \sum_i \bm{\delta}_i \widehat{\psi}_i
	+ \frac{1}{2} \sum_{ij} \bm{\delta}_i \bm{\delta}_j
	\widehat{\psi}_{ij}
	+ \mathcal{O}(\left|\bm{\delta}\right|^3),
\end{equation}
where $\widehat{\psi}$, $\widehat{\psi}_i$, and $\widehat{\psi}_{ij}$
denote the normalized wavefunction and its partial derivatives
\begin{equation}
    \widehat{\psi} = \widehat{\psi}_{\bm{\theta}}, \qquad
    \widehat{\psi}_i = \partial_{\bm{\theta}_i} \widehat{\psi}_{\bm{\theta}},
    \qquad \widehat{\psi}_{ij} = \partial^2_{\bm{\theta}_i \bm{\theta}_j} \widehat{\psi}_{\bm{\theta}}.
\end{equation}

Manipulating \eqref{eq:taylor},
we then decompose the energy difference
$\mathcal{E}\big[\widehat{\psi}_{\bm{\theta} + \bm{\delta}}\big] - \mathcal{E}\big[\widehat{\psi}_{\bm{\theta}}\big]$ 
into the sum of gradient and Hessian terms
\begin{multline}
	\underbrace{\mathcal{E}\big[\widehat{\psi}_{\bm{\theta} + \bm{\delta}}\big]
	- \mathcal{E}\big[\widehat{\psi}_{\bm{\theta}}\big]}_{\text{energy difference}} \\
	= \underbrace{\bm{\delta}^{\ast} \bm{g} + \bm{g}^{\ast} \bm{\delta}}_{\text{gradient terms}}
	+ \underbrace{\bm{\delta}^{\ast} \bm{H} \bm{\delta}
		+ \Re (\bm{\delta}^T \,\overline{\bm{J}}\, \bm{\delta})}_{\text{Hessian terms}} + \mathcal{O}(\left|\bm{\delta}\right|^3).
\end{multline}
These gradient and Hessian terms are given explicitly by
\begin{align}
\label{eq:definitions_1}
	\bm{g}_i = \frac{\big<\,\widehat{\psi}_i,
		\widehat{\mathcal{H}}\,
		\widehat{\psi}\,\big>}
	{\big<\,\widehat{\psi}, \widehat{\psi}\,\big>},
	&&
	\bm{H}_{ij} = \frac{\big<\,\widehat{\psi}_i,
		\widehat{\mathcal{H}}\,
		\widehat{\psi}_j\,\big>}
	{\big<\,\widehat{\psi}, \widehat{\psi}\,\big>}, && \\
	\label{eq:definitions_2}
	\bm{J}_{ij} = \frac{\big<\,\widehat{\psi}_{ij},
		\widehat{\mathcal{H}}\,
		\widehat{\psi}\,\big>}
	{\big<\,\widehat{\psi}, \widehat{\psi}\,\bigr>}, &&
\end{align}
where $\widehat{\mathcal{H}} = \mathcal{H} - \mathcal{E}\big[\widehat{\psi}\big]$ is an energy-shifted version of the operator $\mathcal{H}$. 

Equations \eqref{eq:definitions_1} and \eqref{eq:definitions_2} offer transparent formulas for the energy gradient and Hessian.
In the case of real-valued parameters,
the energy gradient is $2 \bm{g}$,
and the energy Hessian is
$2 \bm{H} + 2 \bm{J}$.
In the case of complex-valued parameters (such as in the setting of \cite{carleo2017solving}),
the Wirtinger gradient \cite{wirtinger1927formalen, schreier_scharf_2010} of the energy is
$\big(\begin{smallmatrix}
\bm{g} \\
\overline{\bm{g}}
\end{smallmatrix}\big)$,
and the Wirtinger Hessian is 
$\big(\begin{smallmatrix}
\bm{H} & \bm{J} \\
\overline{\bm{J}} & \overline{\bm{H}}
\end{smallmatrix}\big)$.

The structure of the Hessian simplifies near the ground state, 
since $\bm{J} \rightarrow \bm{0}$
as the wavefunction approaches any eigenstate of $\mathcal{H}$.
\begin{proposition}{\label{prop:regularity}}
    The matrix $\bm{J}$ is bounded by
    \begin{equation}
		\left| \bm{J}_{ij} \right|
		\leq \frac{\big\lVert\, \widehat{\psi}_{ij}\, \,\big\rVert}
		{\big\lVert\, \widehat{\psi} \,\big\rVert} 
		\min_{\lambda \in \mathbb{R}}
		\frac{\big\lVert \left(\mathcal{H} - \lambda \right)
			\widehat{\psi} \,\big\rVert}
		{\big\lVert \,\widehat{\psi} \,\big\lVert}
    \end{equation}
    Therefore, $\bm{J} \rightarrow \bm{0}$ as $\min_{\lambda \in \mathbb{R}} 
    \big\lVert \left(\mathcal{H} - \lambda \right)
	\widehat{\psi} \,\big\rVert \slash \big\lVert\widehat{\psi} \,\big\rVert \rightarrow 0$,
	assuming uniformly bounded $\big\lVert \, \widehat{\psi}_{ij} \,\big\rVert \slash \big\lVert \,\widehat{\psi}\, \big\rVert$ terms.
\end{proposition}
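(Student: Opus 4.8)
The plan is to bound $\bm{J}_{ij}$ directly from its definition in \eqref{eq:definitions_2} using the Cauchy--Schwarz inequality, and then to recognize the resulting residual norm as the \emph{minimal} residual over real spectral shifts. First I would write, using $\widehat{\mathcal{H}} = \mathcal{H} - \mathcal{E}\big[\widehat{\psi}\big]$ and Cauchy--Schwarz on the Hilbert-space inner product,
\[
	\left| \bm{J}_{ij} \right|
	= \frac{\big| \big<\, \widehat{\psi}_{ij}, \big(\mathcal{H} - \mathcal{E}[\widehat{\psi}]\big)\widehat{\psi}\,\big> \big|}{\big<\,\widehat{\psi},\widehat{\psi}\,\big>}
	\leq \frac{\big\lVert\, \widehat{\psi}_{ij}\,\big\rVert}{\big\lVert\,\widehat{\psi}\,\big\rVert} \cdot \frac{\big\lVert \big(\mathcal{H} - \mathcal{E}[\widehat{\psi}]\big)\widehat{\psi} \,\big\rVert}{\big\lVert\,\widehat{\psi}\,\big\rVert},
\]
which already has the shape of the claimed bound, with $\lambda = \mathcal{E}[\widehat{\psi}]$ standing in for the minimizer.

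The second step is to show that this particular shift is optimal, i.e.\ that $\mathcal{E}[\widehat{\psi}] = \big<\,\widehat{\psi},\mathcal{H}\widehat{\psi}\,\big>\big/\big<\,\widehat{\psi},\widehat{\psi}\,\big>$ minimizes $\lambda \mapsto \big\lVert (\mathcal{H}-\lambda)\widehat{\psi}\big\rVert$ over $\lambda\in\mathbb{R}$. Expanding the square,
\[
	\big\lVert (\mathcal{H}-\lambda)\widehat{\psi}\big\rVert^2
	= \big\lVert \mathcal{H}\widehat{\psi}\big\rVert^2 - 2\lambda\,\Re\big<\,\widehat{\psi},\mathcal{H}\widehat{\psi}\,\big> + \lambda^2 \big\lVert\widehat{\psi}\big\rVert^2 ,
\]
and since $\mathcal{H}$ is self-adjoint the inner product $\big<\,\widehat{\psi},\mathcal{H}\widehat{\psi}\,\big>$ is real, so the right-hand side is a strictly convex quadratic in the real variable $\lambda$ whose unique minimizer is $\lambda^{\star} = \big<\,\widehat{\psi},\mathcal{H}\widehat{\psi}\,\big>\big/\big\lVert\widehat{\psi}\big\rVert^2 = \mathcal{E}[\widehat{\psi}]$. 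Substituting $\big\lVert (\mathcal{H}-\mathcal{E}[\widehat{\psi}])\widehat{\psi}\big\rVert = \min_{\lambda\in\mathbb{R}}\big\lVert (\mathcal{H}-\lambda)\widehat{\psi}\big\rVert$ into the first display yields exactly the stated inequality.

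The final ``therefore'' is then immediate: if $\min_{\lambda\in\mathbb{R}}\big\lVert(\mathcal{H}-\lambda)\widehat{\psi}\big\rVert \big/ \big\lVert\widehat{\psi}\big\rVert \to 0$ while $\big\lVert\widehat{\psi}_{ij}\big\rVert\big/\big\lVert\widehat{\psi}\big\rVert$ stays uniformly bounded, each entry of $\bm{J}$ is squeezed to $0$, hence $\bm{J}\to 0$ (the matrix $\bm{J}$ having fixed, finite dimension even when the wavefunction lives in an infinite-dimensional space, since $\bm{\theta}$ is finite-dimensional).

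I do not expect a serious obstacle here: the argument is Cauchy--Schwarz combined with the textbook fact that the Rayleigh quotient is the residual-minimizing shift. The only points requiring a little care are the conjugate-linearity convention of $\left<\cdot,\cdot\right>$ together with self-adjointness of $\mathcal{H}$ --- which is precisely what makes the cross term in the expanded square real, so that the one-variable minimization over $\lambda\in\mathbb{R}$ is legitimate and attains its value at $\mathcal{E}[\widehat{\psi}]$ --- and the interpretation of ``the wavefunction approaches an eigenstate of $\mathcal{H}$'' as the vanishing of the scaled minimal residual $\min_{\lambda}\lVert(\mathcal{H}-\lambda)\widehat{\psi}\rVert\big/\lVert\widehat{\psi}\rVert$, i.e.\ recalling that a nonzero vector is an eigenvector of $\mathcal{H}$ exactly when this residual vanishes for some real $\lambda$.
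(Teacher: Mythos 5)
Your proposal is correct and matches the paper's proof exactly: Cauchy--Schwarz applied to \eqref{eq:definitions_2}, followed by the observation that the Rayleigh quotient $\mathcal{E}[\widehat{\psi}]$ is the residual-minimizing real shift, so that $\lVert \widehat{\mathcal{H}}\,\widehat{\psi}\rVert = \min_{\lambda\in\mathbb{R}}\lVert(\mathcal{H}-\lambda)\widehat{\psi}\rVert$. You simply spell out the one-variable quadratic minimization that the paper states as a known fact.
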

\begin{proof}
    Apply the Cauchy-Schwartz inequality to \eqref{eq:definitions_2}, and use the fact that $\big\lVert \,\widehat{\mathcal{H}} \,\widehat{\psi} \,\big\rVert = \min_{\lambda \in \mathbb{R}} \big\lVert \left(\mathcal{H} - \lambda \right)
	\widehat{\psi} \,\big\rVert$.
\end{proof}
As the wavefunction approaches an eigenstate,
Proposition \ref{prop:regularity} reveals that the Hessian or Wirtinger Hessian takes a simple structure, depending only on first derivatives of the wavefunction. 
To our knowledge this fact has not been previously identified.
An important implication, to be spelled out below in Subsection \ref{sub:consequences}, is that first derivatives suffice to achieve superlinear convergence in VMC optimization, under the assumption that the true ground state lies within our parametric class.

\subsection{Gradient descent methods}
\label{sub:gradient}

The main idea in gradient descent methods is to first approximate the energy using
\begin{equation}
\label{eq:to_minimize}
\mathcal{E}_{\textup{linear}}\big[\hat{\psi}_{\bm{\theta} + \bm{\delta}}\big]
- \mathcal{E}\big[\hat{\psi}_{\bm{\theta}}\big]
= \bm{\delta}^{\ast} \bm{g} + \bm{g}^{\ast} \bm{\delta}
\end{equation}
and then choose $\bm{\delta}$ to minimize \eqref{eq:to_minimize},
plus a penalization term that keeps the update small.
The penalization term may take the form 
\begin{equation}
    \frac{\left|\bm{\delta}\right|^2}{\epsilon} \quad \text{or} \quad
    \frac{\angle \big(\,\hat{\psi}_{\bm{\theta}}, \hat{\psi}_{\bm{\theta} + \bm{\delta}}\,\big)^2}{\epsilon},
\end{equation}
where $\epsilon > 0$ is a tunable parameter.
In the first case, we are restricting the Euclidean norm $\left|\bm{\delta}\right|$ and the resulting method is standard gradient descent.
In the second case, we are restricting the angle between wavefunctions
\begin{equation}
    \angle\big(\,\hat{\psi}_{\bm{\theta}}, \hat{\psi}_{\bm{\theta} + \bm{\delta}}\,\big) =
    \arccos \frac{\big|\big<\,\hat{\psi}_{\bm{\theta}}, \hat{\psi}_{\bm{\theta} + \bm{\delta}}\,\big>\big|}
    {\big\lVert\, \hat{\psi}_{\bm{\theta}} \,\big\rVert
    \big\lVert\, \hat{\psi}_{\bm{\theta} + \bm{\delta}}\,\big\rVert}.
\end{equation}
This leads to a method called `stochastic reconfiguration' or `(quantum) natural gradient descent' 
\cite{becca2017quantum}, which has been used extensively to optimize
traditional \cite{sorella2001generalized,sorella2007weak} and more recent \cite{carleo2017solving,pfau2020ab}
VMC wavefunction ansatzes.

In a high-dimensional or infinite-dimensional Hilbert space, the angle $\angle\big(\,\hat{\psi}_{\bm{\theta}}, \hat{\psi}_{\bm{\theta} + \bm{\delta}}\,\big)$ cannot be computed exactly, so 
natural gradient descent takes advantage of the Taylor series expansion
\begin{equation}
\label{eq:imperfect}
\angle \big(\,\hat{\psi}_{\bm{\theta}}, \hat{\psi}_{\bm{\theta} + \bm{\delta}}\,\big)^2
= \bm{\delta}^{\ast} \bm{S} \bm{\delta} + 
\mathcal{O}\big(\left|\bm{\delta}\right|^3\big),
\end{equation}
where 
\begin{equation}
\bm{S}_{ij} = 
\frac{\big<\,\widehat{\psi}_i, \widehat{\psi}_j\,\big>} {\big<\,\widehat{\psi}, \widehat{\psi}\,\big>} 
\end{equation}
is a positive semidefinite matrix known as the Fubini-Study metric or quantum information metric \cite{stokes2020quantum}.
However, instead of directly using a penalization term 
\begin{equation}
    \frac{\bm{\delta}^{\ast} \bm{S} \bm{\delta}}{\epsilon},
\end{equation}
natural gradient descent uses a slightly modified penalization term 
\begin{equation}
\frac{\bm{\delta}^{\ast} \left(\bm{S} + \eta \bm{I}\right) \bm{\delta}}{\epsilon}.
\end{equation}
Here, $\eta > 0$ is a parameter that makes the matrix $\bm{S} + \eta \bm{I}$
positive definite and prevents large updates
when the Taylor series expansion \eqref{eq:imperfect} is not very accurate.

To make the preceding discussion precise, we formalize gradient descent (GD) methods as follows.
\begin{algorithm}[GD methods]{\label{alg:grad_simple}}
	Choose $\bm{\delta}$ to solve
	\begin{equation}
		\min_{\bm{\delta}}
		\left[\bm{\delta}^{\ast} \bm{g} + \bm{g}^{\ast} \bm{\delta} 
		+ \frac{\bm{\delta}^{\ast} \bm{R} \bm{\delta}}{\epsilon}\right],
	\end{equation}
	where $\bm{R} = \bm{I}$ in GD
	and $\bm{R} = \bm{S} + \eta \bm{I}$ in natural GD.
	Equivalently, set 
	\begin{equation}
	    \bm{\delta} = -\epsilon \bm{R}^{-1} \bm{g}.
	\end{equation}
\end{algorithm}

In addition to GD and natural GD,
alternative gradient descent methods such as Adam \cite{DBLP:journals/corr/KingmaB14} and AMSGrad \cite{j.2018on} have recently gained traction in the VMC community \cite{sabzevari2020accelerated,pfau2020ab,hibat2020recurrent,westerhout2020generalization}. 
These `momentum-based' methods form updates by combining the energy gradient at the current iteration and past iterations.
While such strategies are potentially promising, recent tests \cite{pfau2020ab,wierichs2020avoiding} suggest that natural GD still outperforms momentum-based methods on several challenging VMC test problems.
Therefore, we focus on GD and natural GD, leaving analysis of other gradient descent methods for future work.

\subsection{Rayleigh-Gauss-Newton method}
\label{sub:quasi}

Whereas gradient descent methods are based on a linear approximation of the energy,
we now introduce a method---which we call the Rayleigh-Gauss-Newton (RGN) method---based on the following quadratic energy approximation:
\begin{equation}
	\label{eq:quasi}
	\mathcal{E}_{\textup{quad}}\big[\widehat{\psi}_{\bm{\theta} + \bm{\delta}}\big]
	- \mathcal{E}\big[\widehat{\psi}_{\bm{\theta}}\big]
    = \bm{\delta}^{\ast} \bm{g} + \bm{g}^{\ast} \bm{\delta}
	+ \bm{\delta}^{\ast} \bm{H} \bm{\delta}.
\end{equation}
Here, $\bm{\delta}^{\ast} \bm{g}$ and $\bm{g}^{\ast} \bm{\delta}$
are the exact gradient terms,
while $\bm{\delta}^{\ast} \bm{H} \bm{\delta}$ is just one of the Hessian terms.
There is a strong practical motivation for ignoring the other Hessian term $\Re \left(\bm{\delta}^T \bm{J} \bm{\delta}\right)$,
since evaluating this term would requiring taking second derivatives of the wavefunction with respect to all pairs of parameters,
which becomes burdensome as the number of parameters grows large.

In the RGN method, we minimize the quadratic objective function \eqref{eq:quasi} plus a `natural' penalization term, as described below.
\begin{algorithm}[RGN method]{\label{alg:quad_simple}} 
	Choose $\bm{\delta}$ to solve
	\begin{equation}
	\label{eq:trust}
		\min_{\bm{\delta}}
		\left[\bm{\delta}^{\ast} \bm{g} + \bm{g}^{\ast} \bm{\delta} +
		\bm{\delta}^{\ast} \bm{H} \bm{\delta}
		+ \frac{\bm{\delta}^{\ast} \bm{R} \bm{\delta}}{\epsilon}\right],
	\end{equation}
    where $\bm{R} = \bm{S} + \eta \bm{I}$.
 	Equivalently, set 
 	\begin{equation}
 	\label{eq:consequence}
 	\bm{\delta} = -\left(\bm{H} + \epsilon^{-1} \bm{R}\right)^{-1} \bm{g}.
 	\end{equation}
\end{algorithm}
The parameter $\eta > 0$ is again chosen to make $\bm{H} + \epsilon^{-1} (\bm{S} + \eta \bm{I})$ positive definite and help prevent large parameter updates.

To our knowledge, the RGN method has not appeared before in the literature.
However, it is closely connected to 
the classical Gauss-Newton method for nonlinear least squares problems \cite{nocedal2006numerical},
which can be viewed as deriving from a similar Hessian splitting.
Also, RGN is related to
previous VMC optimization methods,
including the linear method for energy minimization \cite{becca2017quantum}
and a Gauss-Newton-like method
for variance minimization \cite{cuzzocrea2020variational}.
All these approaches can be can be described by first linearizing a class of functions
\begin{equation}
    \hat{\psi}_{\bm{\theta} + \bm{\delta}}
    \approx 
    \hat{\psi} + \sum\nolimits_i \bm{\delta}_i \hat{\psi}_i,
\end{equation}
and then minimizing a nonlinear loss function
applied to the linearized function class
\begin{equation}
    \min_{\bm{\delta}} \,\mathcal{L}\big[\hat{\psi}
    + \sum\nolimits_i \bm{\delta}_i \hat{\psi}_i
    \big].
\end{equation}

For example, in the linear method for VMC,
one first linearizes the intermediate-normalized wavefunction $\widehat{\psi}_{\bm{\theta} + \bm{\delta}}$
and then minimizes
\begin{equation}
\label{eq:rayleigh}
\mathcal{E}\left[\widehat{\psi} 
+ \sum\nolimits_i \bm{\delta}_i \widehat{\psi}_i\right]
- \mathcal{E}\left[\widehat{\psi}\right]
= \frac{\bm{\delta}^{\ast} \bm{g} + \bm{g}^{\ast} \bm{\delta} + \bm{\delta}^{\ast} \bm{H} \bm{\delta}}
{1 + \bm{\delta}^{\ast} \bm{S} \bm{\delta}},
\end{equation}
plus an additional penalization term.
Minimization of \eqref{eq:rayleigh} is equivalent to solving the generalized eigenvalue problem
\begin{equation}
    \begin{pmatrix} 0 & \bm{g}^{\ast} \\ \bm{g} & \bm{H} \end{pmatrix}
    \begin{pmatrix} 1 \\ \bm{\delta} \end{pmatrix}
    = \lambda \begin{pmatrix} 1 & \bm{0} \\ \bm{0} & \bm{S} \end{pmatrix}
    \begin{pmatrix} 1 \\ \bm{\delta} \end{pmatrix}
\end{equation}
for the smallest eigenvalue-eigenvector pair
\cite{gubernatis2016quantum,becca2017quantum}.
As a penalization term, the matrix $\bm{H}$ is padded
with a diagonal matrix $\epsilon^{-1} \bm{I}$,
which is similar to the penalization term used in GD.

The linear method has been observed to yield fast asymptotic convergence in VMC applications with small parameter sets.
However, extending the linear method to larger parameter sets is an ongoing challenge \cite{neuscamman2012optimizing,sabzevari2020accelerated}.
Motivated by the linear method's potential for fast asymptotic convergence, we introduce RGN as an updated strategy with improved convergence.
RGN differs from the linear method in two ways, as detailed below.

First, instead of approximating the energy using the formula \eqref{eq:rayleigh},
RGN uses the quadratic approximation
\begin{equation}
	\mathcal{E}_{\textup{quad}}\big[\widehat{\psi}_{\bm{\theta} + \bm{\delta}}\big]
	- \mathcal{E}\big[\widehat{\psi}_{\bm{\theta}}\big]
	= \bm{\delta}^{\ast} \bm{g} + \bm{g}^{\ast} \bm{\delta}
	+ \bm{\delta}^{\ast} \bm{H} \bm{\delta}.
\end{equation}
This quadratic approximation agrees with \eqref{eq:rayleigh} up to $\mathcal{O}\big(\left|\bm{\delta}\right|^3\big)$ terms,
but 
it only requires the solution of a linear system instead of a generalized eigenvalue problem. 
Although the parametrizations considered in our numerical experiments below are small enough so that neither of these linear algebra routines imposes a computational bottleneck, the distinction may become important for large parameter sets.
For example, 
\cite{neuscamman2012optimizing} introduced a matrix-free approach for solving the linear system in stochastic reconfiguration, but a similar scheme for the generalized eigenvalue problem has not achieved the same success \cite{zhao2017blocked}.

Second, RGN uses a `natural' penalization term $\epsilon^{-1} \bm{\delta}^{\ast}\left(\bm{S} + \eta \bm{I}\right) \bm{\delta}$, which differs from the penalization term used in the linear method.
Because of the penalization,
the linear method converges as $\epsilon \rightarrow 0$ to give standard GD updates.
In contrast,
RGN converges as $\epsilon \rightarrow 0$ to give natural GD updates,
which are more efficient than standard GD updates
when optimizing many VMC wavefunction ansatzes \cite{pfau2020ab,wierichs2020avoiding}.

\subsection{Convergence rate analysis}{\label{sub:consequences}}

GD, natural GD, and RGN can all be presented in the standardized form
\begin{equation}
\label{eq:general}
\bm{P}^i \left(\bm{\theta}^{i+1} - \bm{\theta}^i\right)
= - \bm{g}\left(\bm{\theta}^i\right),
\qquad i = 1, 2, \ldots
\end{equation}
Here, the parameter update $\bm{\theta}^{i+1} - \bm{\theta}^i$ is written as the solution to a linear system involving a positive definite preconditioning matrix $\bm{P}^i$ and a negative energy gradient $-\bm{g}\left(\bm{\theta}^i\right)$.
Table \ref{table:preconditioners} shows the different preconditioners corresponding to the different optimization approaches.

\begin{table}[h!]
\centering
\begin{tabular}{l|l}
    Method & Preconditioner $\bm{P}$ \\
    \hline
    Gradient descent & $\epsilon^{-1} \bm{I}$ \\
    Natural gradient descent ~ & $\epsilon^{-1} \left(\bm{S} + \eta \bm{I}\right)$ \\
    Rayleigh-Gauss-Newton &
    $\bm{H} + \epsilon^{-1} \left(\bm{S} + \eta \bm{I}\right)$
\end{tabular}
\caption{Different preconditioners for energy minimization.}
\label{table:preconditioners}
\end{table}

To help quantify the efficiency of the various optimization methods,
Proposition \ref{prop:rate} considers a general sequence of positive definite preconditioners $\bm{P}^1, \bm{P}^2, \ldots$ 
and derives sharp asymptotic bounds on the resulting energies $\mathcal{E}\left[\psi_{\bm{\theta}^1}\right], \mathcal{E}\left[\psi_{\bm{\theta}^2}\right], \ldots$.
Proposition \ref{prop:rate} is based on standard optimization theory (e.g., \cite{nocedal2006numerical}),
but here we extend this theory to the complex-valued wavefunctions
often used in VMC.

\begin{proposition}{\label{prop:rate}} 
    Consider the parameter updates
    $\bm{P}^i \left(\bm{\theta}^{i+1} - \bm{\theta}^i\right) = - \bm{g}\left(\bm{\theta} ^i\right)$.
    Assume $\bm{\theta}^1, \bm{\theta}^2, \ldots$ converges to a local energy minimizer $\bm{\theta}^{\ast}$,
    and the Hessian or Wirtinger Hessian is positive definite at $\bm{\theta}^{\ast}$.
    Then,
    \begin{multline}
        \limsup_{i \rightarrow \infty} \frac{\mathcal{E}\left[\psi_{\bm{\theta}^{i+1}}\right] 
        - \mathcal{E}\left[\psi_{\bm{\theta}^{\ast}}\right]}
        {\mathcal{E}\left[\psi_{\bm{\theta}^{i}}\right] 
        - \mathcal{E}\left[\psi_{\bm{\theta}^{\ast}}\right]} \\
        \leq 
        \limsup_{i \rightarrow \infty} 
        \left\lVert \bm{I} -
        \left(\bm{H} + \bm{J}\right)^{\frac{1}{2}}
        \bm{P}_i^{-1} \left(\bm{H} + \bm{J}\right)^{\frac{1}{2}} \right\rVert_2^2
    \end{multline}
    or
    \begin{multline}
        \limsup_{i \rightarrow \infty} \frac{\mathcal{E}\left[\psi_{\bm{\theta}^{i+1}}\right] 
        - \mathcal{E}\left[\psi_{\bm{\theta}^{\ast}}\right]}
        {\mathcal{E}\left[\psi_{\bm{\theta}^{i}}\right] 
        - \mathcal{E}\left[\psi_{\bm{\theta}^{\ast}}\right]} \\
        \leq 
        \limsup_{i \rightarrow \infty} 
        \Big\lVert \bm{I} -
        \bigl(\begin{smallmatrix}
        \bm{H} & \bm{J} \\
        \overline{\bm{J}} & \overline{\bm{H}}
        \end{smallmatrix}\bigr)^{\frac{1}{2}}
        \bigl(\begin{smallmatrix}
        \bm{P}_i & \bm{0} \\
        \bm{0} & \overline{\bm{P}_i}
        \end{smallmatrix}\bigr)^{-1}
        \bigl(\begin{smallmatrix}
        \bm{H} & \bm{J} \\
        \overline{\bm{J}} & \overline{\bm{H}}
        \end{smallmatrix}\bigr)^{\frac{1}{2}}
        \Big\rVert_2^2
    \end{multline}
in the real and complex cases, respectively, where $\bm{H} = \bm{H}(\bm{\theta}^{\ast})$ and $\bm{J} = \bm{J}(\bm{\theta}^{\ast})$. 
\end{proposition}
\begin{proof}
    See Appendix \ref{sec:proofs}.
\end{proof}

The convergence rate in Proposition \ref{prop:rate} depends on a matrix $\bm{J}$ which vanishes
at the ground state.
Therefore,
If the RGN method is applied with penalization parameters $\epsilon = \epsilon^i$ tending to infinity and wavefunctions $\psi_{\bm{\theta}^i}$ approaching the ground state, the rate of convergence is \emph{superlinear}, i.e.,
\begin{equation}
\limsup_{i \rightarrow \infty} \frac{\mathcal{E}\left[\psi_{\bm{\theta}^{i+1}}\right] 
- \mathcal{E}\left[\psi_{\bm{\theta}^{\ast}}\right]}
{\mathcal{E}\left[\psi_{\bm{\theta}^{i}}\right] 
- \mathcal{E}\left[\psi_{\bm{\theta}^{\ast}}\right]} = 0. 
\end{equation}
In practice, our parametric class does not usually contain the \emph{exact} ground state for $\mathcal{H}$, but if $\epsilon$ is large and the energy minimizer is close to the ground state, then Proposition \ref{prop:rate} still quantifies a fast linear convergence rate for RGN.
In the numerical experiments presented in Section \ref{sec:results},
we achieve such a fast convergence rate by gradually moving the parameter $\epsilon$ closer to zero as we make progress in optimizing the wavefunction.

\section{VMC sampling analysis}{\label{sec:sampling}}

In this section, we review VMC sampling and
prove a vanishing-variance principle that quantifies
the sampling error in the estimated energies and gradients.
Then, we discuss challenges in VMC sampling
and motivate strategies to improve the sampling.

\subsection{VMC sampling}

VMC requires quantities such as $\mathcal{E}$, $\bm{g}$, $\bm{S}$, and $\bm{H}$ that are constructed as sums or integrals over a high-dimensional 
or infinite-dimensional state space.
To compute such quantities, 
VMC relies on the power of Monte Carlo sampling.
In VMC, we first generate 
a large number of samples $\bm{\sigma}_1, \bm{\sigma}_2, \ldots, \bm{\sigma}_T$ 
from the normalized wavefunction density
\begin{equation}
\rho(\bm{\sigma}) = \frac{\left|\psi(\bm{\sigma})\right|^2}{\left<\psi, \psi\right>}
\end{equation}
using an appropriate Markov chain Monte Carlo (MCMC, \cite{liu2008monte}) sampler.
Then we approximate $\mathcal{E}$, $\bm{g}$, $\bm{S}$, and $\bm{H}$ using the following estimators:
\begin{align}
\label{eq:formula1}
    &\hat{\mathcal{E}}
    = \E_{\hat{\rho}}\!\left[E_{L}(\bm{\sigma})\right], \\
    &\hat{\bm{g}}_i 
    = \Cov_{\hat{\rho}}\!\left[\bm{\nu}_i(\bm{\sigma}), E_{L}(\bm{\sigma})\right], \\
    &\hat{\bm{S}}_{ij} 
    = \Cov_{\hat{\rho}}\!\left[\bm{\nu}_i(\bm{\sigma}), \bm{\nu}_j(\bm{\sigma})\right], \\
\label{eq:formula4}
    &\hat{\bm{H}}_{ij} = \Cov_{\hat{\rho}}\!\left[\bm{\nu}_i(\bm{\sigma}), E_{L,j}(\bm{\sigma})\right]
    -\hat{\bm{g}}_i \E_{\hat{\rho}}\!\left[\bm{\nu}_j(\bm{\sigma})\right]
    - \hat{\mathcal{E}} \hat{\bm{S}}_{ij}.
\end{align}
Here, $\E_{\hat{\rho}}$ and $\Cov_{\hat{\rho}}$ 
denote expectations and covariances with respect to the empirical measure
\begin{equation}
\hat{\rho} = \frac{1}{T} \sum_{t=1}^T \delta_{\bm{\sigma}_t},
\end{equation}
and we have introduced the functions
\begin{align}
    E_{L}(\bm{\sigma}) 
    = \frac{\mathcal{H} \psi (\bm{\sigma})}{\psi(\bm{\sigma})},
    && E_{L,i}(\bm{\sigma}) 
    = \frac{\mathcal{H} \partial_{\bm{\theta}_i} \psi (\bm{\sigma})}{\psi(\bm{\sigma})}, \\
    \bm{\nu}_i(\bm{\sigma}) = \frac{\partial_{\bm{\theta}_i} \psi(\bm{\sigma}) }{\psi(\bm{\sigma})}. &&
\end{align}
The functions $E_{L}$ and $\bm{\nu}_i$ are known as the local energy and logarithmic derivative, respectively.

Next we state the vanishing-variance principle, which quantifies the asymptotic variance of several VMC estimators of interest.

\begin{proposition}{\label{prop:zero_variance}}
    Assume the MCMC sampler is geometrically ergodic with respect to $\rho$,
    and for some $\epsilon > 0$, 
    $\E_{\rho}\!\left|E_{L}(\bm{\sigma})\right|^{4 + \epsilon} < \infty$
    and 
    $\sup_i \E_{\rho}\!\left|\nu_i(\bm{\sigma})\right|^{4 + \epsilon} < \infty$.
    Then, as $T \rightarrow \infty$,
    \begin{align}
        & \sqrt{T} \big(\,\hat{\mathcal{E}}_T - \mathcal{E}\big)
        \stackrel{\mathcal{D}}{\rightarrow}
        \mathcal{N}\left(0, v^2\right), \\
        & \sqrt{T}\big(\,\hat{\bm{g}}_T - \bm{g}\big) \stackrel{\mathcal{D}}{\rightarrow} \mathcal{N}\left(\bm{0}, \bm{\Sigma}\right),
    \end{align}
    where the asymptotic variances $v^2$ and $\bm{\Sigma}$ are given by
    \begin{align}
    \label{eq:var1}
        & \begin{aligned}
        v^2 & =\sum_{t=0}^{\infty}
        \Cov_{\bm{\sigma}_0 \sim \rho} \left[E_{L}(\bm{\sigma}_0), E_{L}(\bm{\sigma}_t)\right] \\
        & + \sum_{t=1}^{\infty} \Cov_{\bm{\sigma}_0 \sim \rho} \left[E_{L}(\bm{\sigma}_t), E_{L}(\bm{\sigma}_0)\right],
        \end{aligned}
        \\
    \label{eq:var2}
        & \begin{aligned}
        \bm{\Sigma}_{ij} &=
        \sum_{t=0}^{\infty} \Cov_{\bm{\sigma}_0 \sim \rho}
        \left[\bm{g}^{\prime}_i(\bm{\sigma_0}), 
        \bm{g}^{\prime}_j(\bm{\sigma}_t)\right] \\
        &+
        \sum_{t=1}^{\infty} \Cov_{\bm{\sigma}_0 \sim \rho}
        \left[\bm{g}^{\prime}_i(\bm{\sigma_t}), 
        \bm{g}^{\prime}_j(\bm{\sigma}_0)\right],
        \end{aligned},
    \end{align}
    and
    $\bm{g}^{\prime}$ is defined as 
    \begin{equation} \bm{g}^{\prime}(\bm{\sigma})
        = \overline{\left( \bm{\nu}(\bm{\sigma}) - \E_{\bm{\sigma}^{\prime} \sim \rho}\! \left[\bm{\nu}
        \left(\bm{\sigma}^{\prime}\right) \right] \right)}
        \left(E_{L}(\bm{\sigma}) - \mathcal{E}\right).
    \end{equation}
\end{proposition}
\begin{proof}
    See Appendix \ref{sec:proofs}.
\end{proof}

As a major takeaway from Proposition \ref{prop:zero_variance}, 
the energy and energy gradient both have zero variance,
i.e., $v^2 = 0$ and $\bm{\Sigma} = \bm{0}$,
when the local energy $E_L$ is constant,
as occurs at any eigenstate of $\mathcal{H}$.
Proposition \ref{prop:zero_variance} can thus be viewed as a robust and quantitative extension of the classic zero-variance principle \cite{gubernatis2016quantum,becca2017quantum} of VMC.
The vanishing-variance principle is robust, since it holds when the wavefunction is not an eigenstate, and quantitative, since it gives a precise formula for the asymptotic variance of the energy 
and energy gradient estimators.


\subsection{Improving estimation quality} \label{sub:improving}

Near an eigenstate, the vanishing-variance principle ensures the relative accuracy of VMC estimated energies.
However, away from an eigenstate, VMC estimated energies and energy gradients can have a high variance and change erratically over the course of VMC estimation
\cite{westerhout2020generalization,park2020geometry}.
Therefore, variance reduction strategies are needed to ensure VMC's success.

Proposition \ref{prop:zero_variance}
suggests three strategies for reducing the variance.
The first strategy
is to increase the number of Monte Carlo samples.
We can do this either by running one MCMC sampler for a long time or by running many MCMC samplers in parallel and combining samples. 
The parallel sampling approach often leads to computational advantages,
since vectorized code runs quickly on modern computers and
MCMC samplers can be run on multiple nodes/cores
to further cut down on the runtime.
In the numerical experiments 
in Section \ref{sec:results},
we run $50$ MCMC samplers per CPU core
and use $48$ CPU cores, 
thus generating $2400$ parallel MCMC samplers.

The second variance reduction strategy is to reduce
correlations among the samples
$\bm{\sigma}_1, \bm{\sigma}_2, \ldots, \bm{\sigma}_T$
by applying a fast-mixing MCMC method such as parallel tempering
\cite{swendsen1986replica}.
Parallel tempering introduces interacting MCMC samplers
that target different densities
\begin{equation}
    \rho_i(\bm{\sigma}) \propto \rho(\bm{\sigma})^{i \slash m}, \qquad i = 0, 1, \ldots, m,
\end{equation}
Periodically, the samplers targeting adjacent densities $\rho_i$ and $\rho_{i+1}$
swap positions according to a
Metropolis acceptance probability \cite{metropolis1953equation},
which improves the mixing time for each of the samplers.
Lastly, the samplers targeting $\rho_m$ are used for estimating $\mathcal{E}$, $\bm{g}$, $\bm{S}$, and $\bm{H}$.
Parallel tempering 
has reduced correlations in challenging
VMC test problems in the past \cite{choo2018symmetries,park2020geometry},
and in Subsection \ref{sub:sampling} we apply parallel tempering to improve the sampling for XXZ models on large lattices.

The third variance reduction strategy
is to directly alter the VMC update formula to improve its stability.
For example, \cite{pfau2020ab} and \cite{hermann2020deep} 
use an alternative gradient estimator in which the most extreme local energy
values are adjusted to be 
closer to the median.
Similarly, \cite{luo2019backflow}
rounds all positive gradient entries to $+1$, round all negative gradient entries to $-1$, and then assign a random independent magnitude to each entry.
The approaches \cite{pfau2020ab,hermann2020deep,luo2019backflow} all improve the stability of VMC updates, 
but they discard gradient information
that could potentially be helpful.
Therefore, we adopt a slightly different stabilization approach in Section \ref{sec:results}.
At each iteration, we check that the parameter update is less than twice as large as the previous parameter update (in Euclidean norm).
If not, we shrink $\epsilon$ in half repeatedly until the parameter update is sufficiently small.
This stabilization
code eliminates the most erratic
parameter updates in our experiments.
The code is rarely triggered for TFI models
(just $0$--$2$ times per $1000$ updates),
but it is more commonly triggered for XXZ models
($9$--$34$ times per $1000$ updates).

\section{Numerical experiments}{\label{sec:results}}

To test the performance of VMC optimization and sampling methods, we estimate the ground-state energies for
the transverse-field Ising (TFI) and XXZ models 
on 1-D and 2-D lattices with periodic boundary conditions.
These models are specified by the Hamiltonians 
\begin{align}
\label{eq:tfi}
    & \mathcal{H}_{\textup{TFI}} =
    -\sum_{i \sim j} \sigma_i^z \sigma_j^z - h \sum_i \sigma_i^x,  \\
\label{eq:xxz}
    & \mathcal{H}_{\textup{XXZ}} =
    - \Delta \sum_{i \sim j} \sigma_i^z \sigma_j^z
    + \sum_{i \sim j} [\sigma_i^y \sigma_j^y - \sigma_i^x \sigma_j^x],
\end{align}
where $h > 0$ and $\Delta > 0$ are positive-valued parameters.
The XXZ model is sometimes alternatively defined as
\begin{equation}
    \mathcal{H}_{\textup{XXZ}} =
    \Delta \sum_{i \sim j} \sigma_i^z \sigma_j^z
    + \sum_{i \sim j} [\sigma_i^x \sigma_j^x
    + \sigma_i^y \sigma_j^y],
\end{equation}
which is a unitary transformation of \eqref{eq:xxz},
assuming a bipartite lattice.
As a consequence of the Perron-Frobenius theorem
and translational symmetry,
the models 
\eqref{eq:tfi} and \eqref{eq:xxz}
both admit
unique,  nonnegative, translationally invariant ground-state wavefunctions.
For 1-D lattices but not 2-D lattices, the
ground-state wavefunctions are known exactly \cite{bethe1931theorie,pfeuty1970one}.

As a wavefunction ansatz, we use a restricted Boltzmann machine (RBM), which can be written as
\begin{equation}
    \psi_{\bm{w},\bm{b}} (\bm{\sigma}) = \prod_{i=1}^{\alpha} \prod_\mathcal{T} \cosh\Bigg(\sum_j \bm{w}_{ij} \left(\mathcal{T} \bm{\sigma}\right)_j + \bm{b}_i \Bigg).
\end{equation}
Here, $\alpha$ is the hidden-variable density that controls the number of parameters, $\mathcal{T}$ ranges over the translation operators on the periodic lattice, and $\bm{w}$ and $\bm{b}$ are vectors of complex-valued parameters, called \emph{weights} and \emph{biases}. 
This ansatz is an example of a two-layer neural network and is a simplification of the RBM ansatz used for VMC optimization in \cite{carleo2017solving}.
The ansatz involves $\alpha \left(n + 1\right)$ parameters,
where $n$ is the number of spins and we set 
$\alpha = 5$ for all of our numerical experiments.
We report additional implementation details in Appendix \ref{sec:computations}.

\subsection{Comparing optimization methods}

To compare different VMC optimizers in the noiseless setting,
we apply VMC to TFI model on a $10 \times 1$ lattice, which is small enough so that $\mathcal{E}$, $\bm{g}$, $\bm{S}$, and $\bm{H}$ can be computed by exact summation without the need for Monte Carlo sampling.
Figure \ref{fig:comparison} evaluates the performance of different optimizers in this setting,
i.e., with deterministic parameter updates.
The figure shows that RGN
leads to faster convergence and lower errors than GD, natural GD, and the linear method.

\begin{figure}[h!]
    \centering
    \includegraphics[scale=.35]{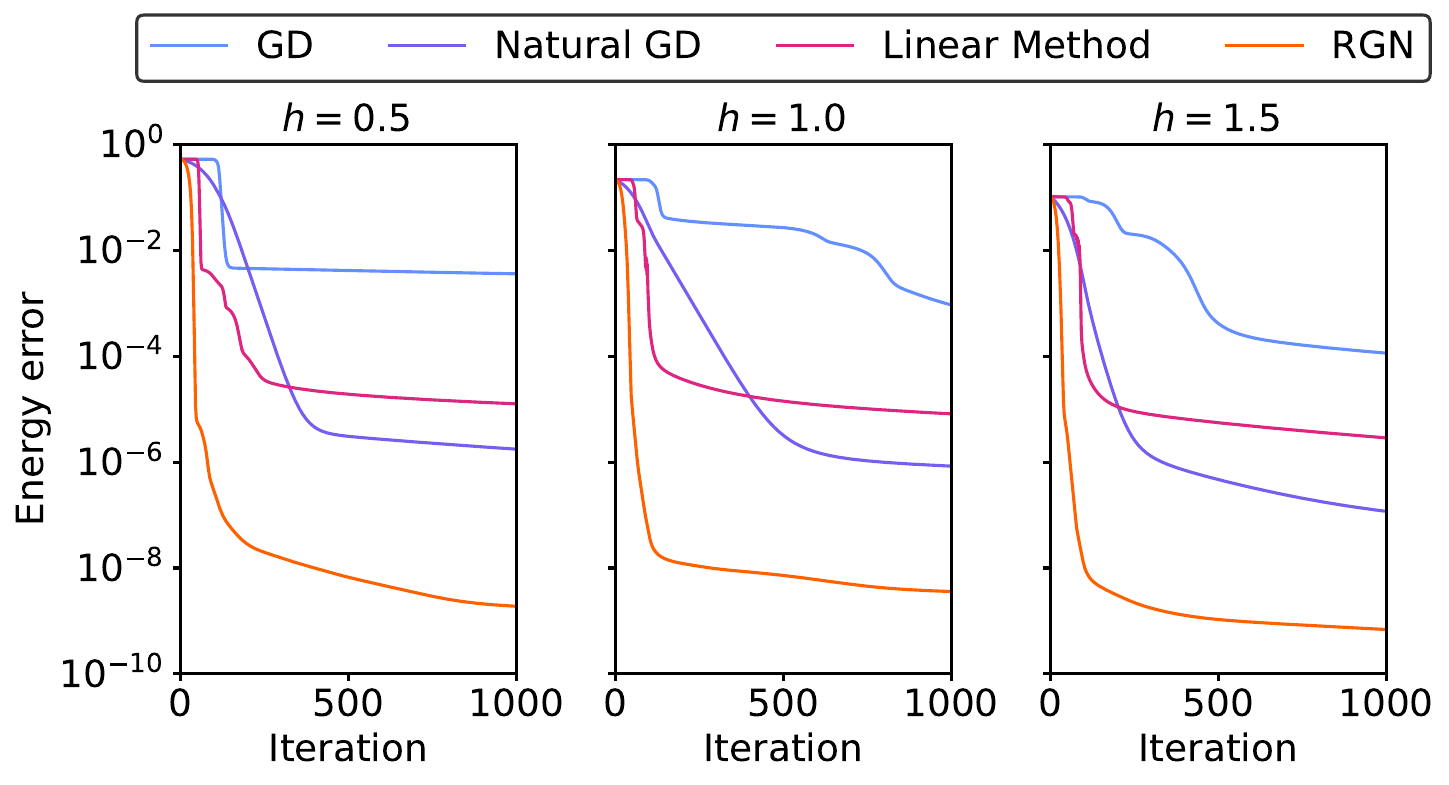}
    \caption{RGN achieves low energy errors in ferromagnetic ($h = 0.5$, left),
    transitional ($h = 1.0$, center),
    and paramagnetic ($h = 1.5$, right) regimes.
    Plot shows relative error in ground-state energy estimates.}
    \label{fig:comparison}
\end{figure}

In light of Proposition \ref{prop:rate}, 
we expect the most rapid energy convergence when the preconditioner is close to the true Hessian $\left(\begin{smallmatrix}
    \bm{H} & \bm{J} \\
    \overline{\bm{J}} & \overline{\bm{H}}
    \end{smallmatrix}\right)$. 
Indeed, Figure \ref{fig:hessian} confirms that the Hessian approximation
used in RGN closely approximates the true Hessian, in concordance with the fast observed convergence rate.

\begin{figure}[h!]
    \centering
    \includegraphics[scale=.35]{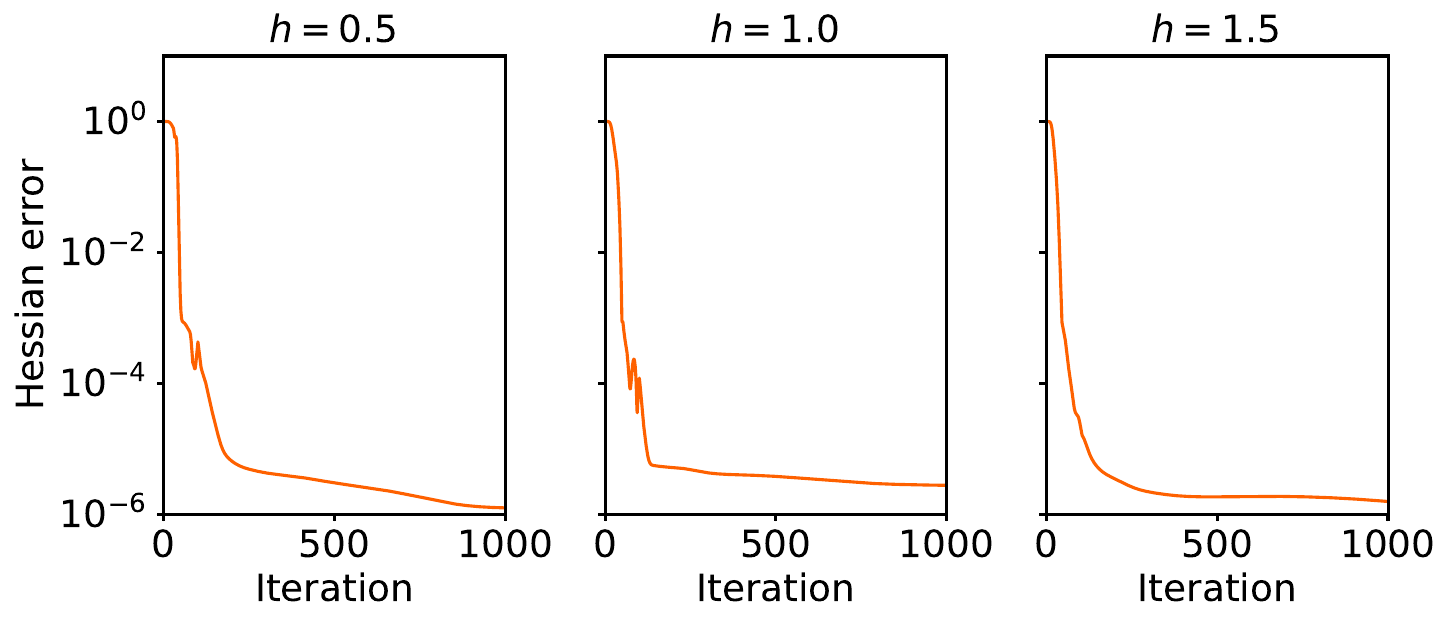}
    \caption{RGN achieves accurate Hessian approximations
    with relative errors $< 10^{-5}$ for most iterations.
    Plot shows relative error $\left|\left(\begin{smallmatrix}
    \bm{0} & \bm{J} \\
    \overline{\bm{J}} & \overline{\bm{0}}
    \end{smallmatrix}\right)\right| \slash 
    \left|\left(\begin{smallmatrix}
    \bm{H} & \bm{J} \\
    \overline{\bm{J}} & \overline{\bm{H}}
    \end{smallmatrix}\right)\right|$ computed at each iteration.}
    \label{fig:hessian}
\end{figure}

\subsection{Challenges in VMC sampling} \label{sub:sampling}

We next apply VMC to larger lattices by incorporating MCMC sampling.
For the TFI model, we initialize the MCMC samplers from a configuration chosen uniformly at random and propose random updates based on flipping a single spin.
For the XXZ model, we confine the MCMC samplers to
`balanced' configurations for which the magnetization is the same on both components of the bipartite lattice,
since the ground-state wavefunction is only supported on these configurations.
We initialize from a random balanced configuration and propose random balanced updates based on flipping two spins.

At every new optimization step,
the MCMC samplers are continued from the final configurations at the previous step.
The MCMC samplers are then run for $20 \times n$ time steps,
and the local energies and logarithmic derivatives are evaluated
at intervals of $n$ time steps, where $n$ denotes the number of spins.

The MCMC samplers
are guaranteed to mix quickly when sampling the ground-state wavefunctions for the TFI model at $h = \infty$ or the XXZ model at $\Delta = -1$. 
For these extreme parameter settings, every Metropolis proposal is accepted, the relaxation time for the TFI sampler is $n \slash 2$ \cite{levin2017markov}, and the relaxation time for the XXZ sampler is $n \slash 4$ \cite{diaconis1988group}.
Yet, there is no guarantee that the MCMC samplers remain efficient for the $h$ and $\Delta$ values more reasonably encountered. 

Indeed, 
the RGN and natural GD optimizers encounter difficulties when calculating ground-state energies for the XXZ model,
as shown in Figure \ref{fig:tempering}.
Initially during the optimization,
the RGN energies decrease quickly,
but at iteration $360$ ($\Delta = 1.5$) or $370$ ($\Delta = 1.0$),
the energies exhibit a large spike,
which persists over roughly $1000$ optimization steps.
The natural GD energies exhibit a spike later,
during iterations $1500$--$4000$ ($\Delta = 1.5$),
which makes sense because the 
natural GD optimizer converges more slowly than the RGN optimizer overall.

\begin{figure}[h!]
    \centering
    \includegraphics[scale=.35]{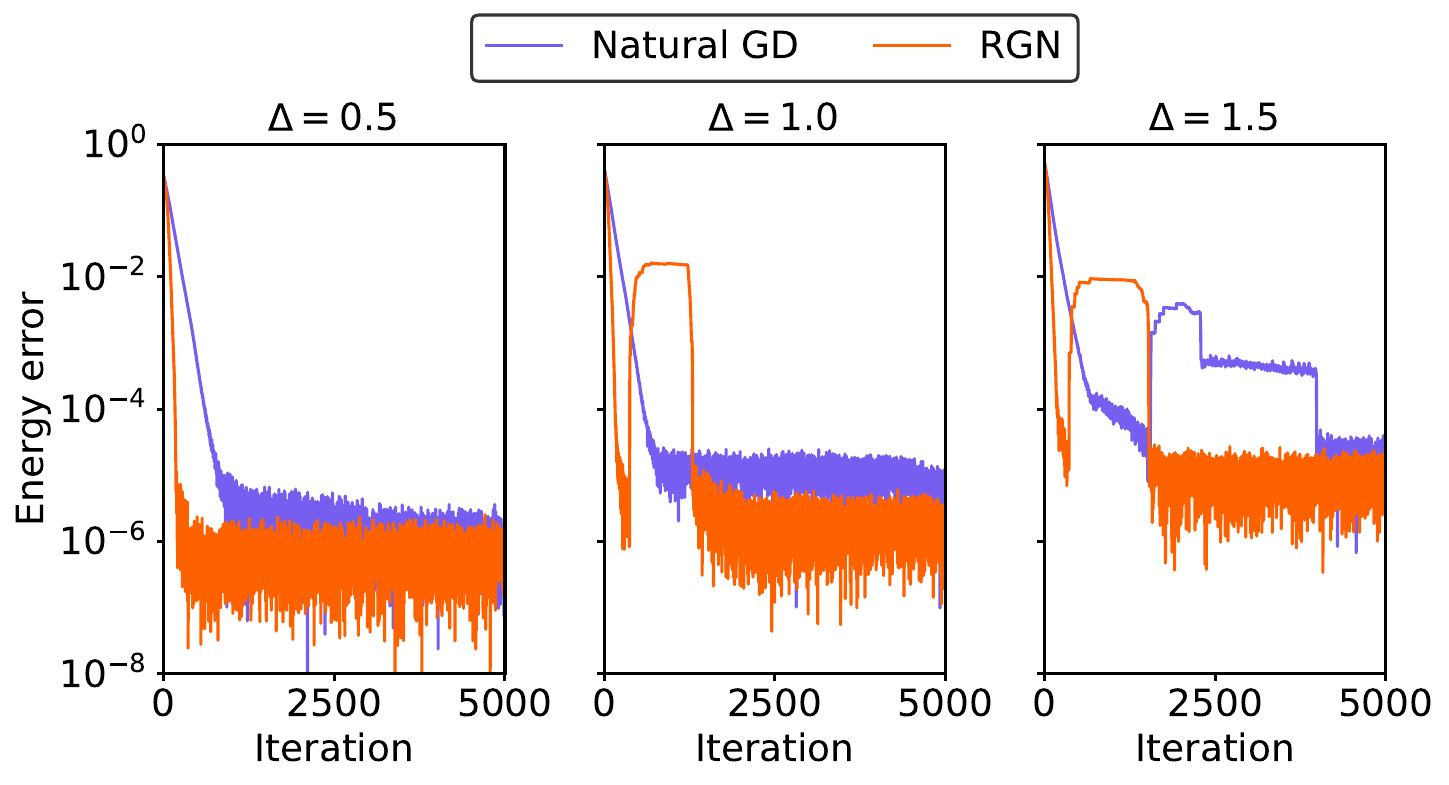}
    \caption{VMC can lead to energy spikes if direct MCMC sampling is used.
    Plot shows relative error in ground-state energy estimates
    for an XXZ model on a $100 \times 1$ lattice.}
    \label{fig:tempering}
\end{figure}

The energy spikes are a major difference between exact VMC energies and energy estimates from MCMC sampling.
The exact energies change slowly and continuously, 
as seen in Figure \ref{fig:comparison}.
However, the MCMC energy estimates can spike
if a slowly-mixing MCMC sampler moves between metastable regions of configuration space.
Indeed, Figure \ref{fig:explain_spikes} establishes that all $2400$ MCMC samples typically lie in the ferromagnetic region of configuration space.
At the onset of the energy spikes, a few MCMC samplers ($5$--$30$) enter the antiferromagnetic region of configuration space, encountering
high densities and high local energies that have not been experienced before.
The densities and local energies are extremely large due to generalization error,
and the optimizers require $1000+$ iterations to respond to the new MCMC data and eliminate the spikes.

\begin{figure}[h!]
    \centering
    \includegraphics[scale=.35]{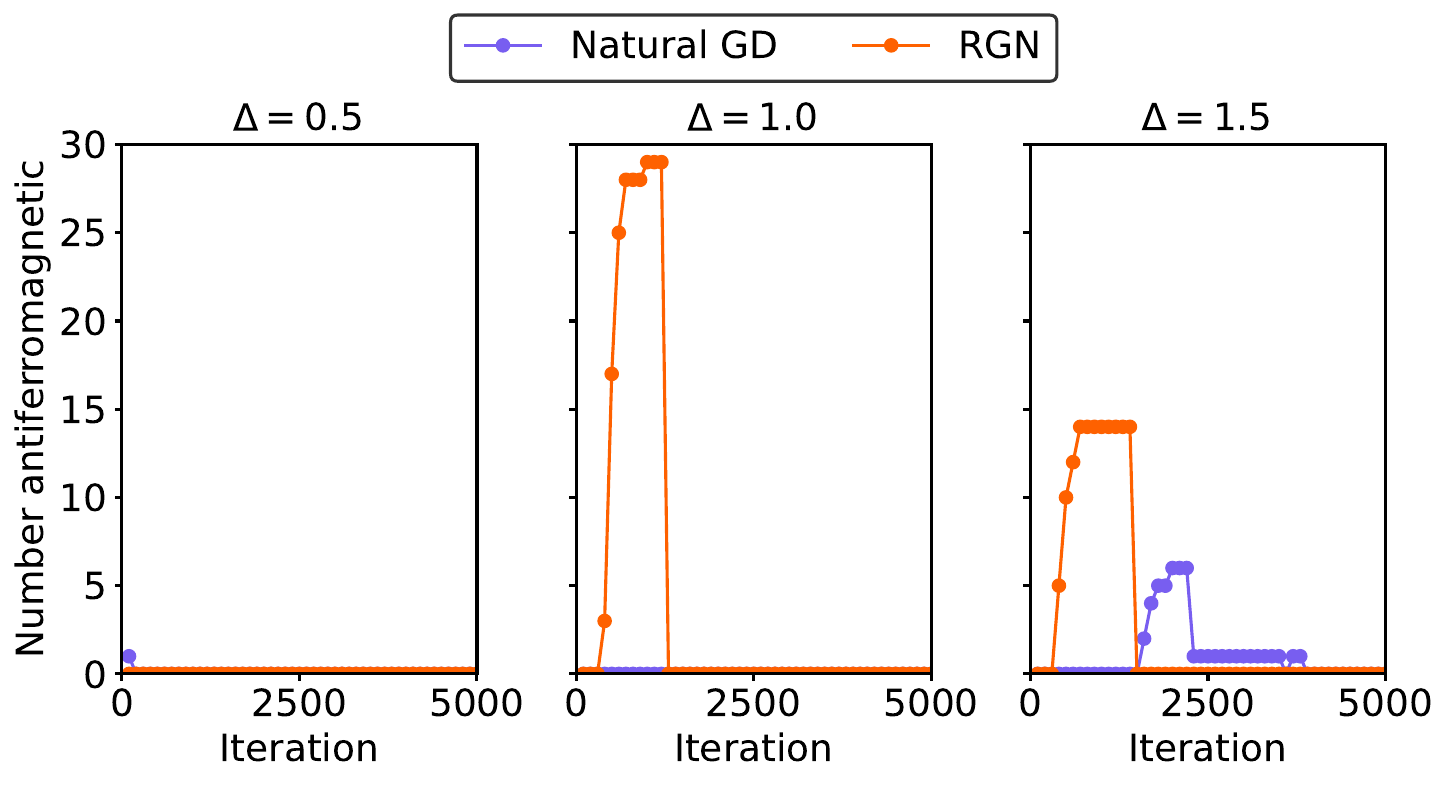}
    \caption{Energy spikes occur when a few MCMC samplers enter the antiferromagnetic region of configuration space, defined by $\sum_{i \sim j} \sigma_i \sigma_j < 0$.
    Plot shows number of samplers in the antiferromagnetic region, evaluated at every $100$ iterations.}
    \label{fig:explain_spikes}
\end{figure}

To improve the sampling for XXZ models, we apply the parallel tempering method described in Section \ref{sub:improving} using fifty target densities.
Parallel tempering speeds up the mixing of the MCMC samplers
and reduces the magnitude and longevity of the energy spikes, as shown in Figure \ref{fig:fixed}.
With parallel tempering, the same number of MCMC samples ($2400 \times 20$) are generated per iteration as in direct MCMC sampling, but the quality of the samples is much higher.
The high-quality sampling reduces the generalization error and improves the overall stability of VMC.

\begin{figure}[h!]
    \centering
    \includegraphics[scale=.35]{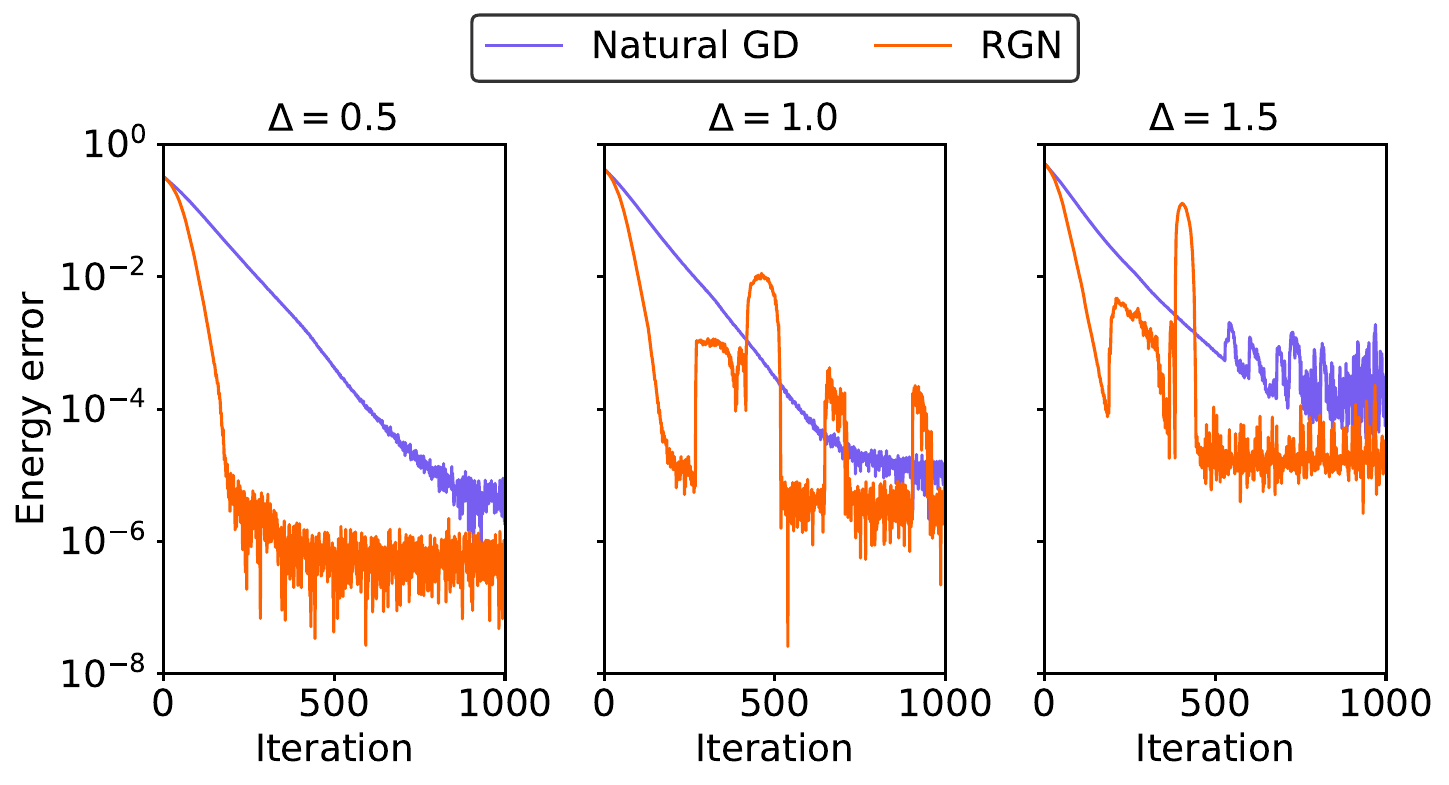}
    \caption{VMC recovers more quickly from energy spikes if parallel tempering is used.}
    \label{fig:fixed}
\end{figure}

\subsection{Results for larger systems}

Lastly, we apply VMC to estimate ground-state energies for TFI and XXZ models on lattices with up to $400$ spins.
We train highly accurate VMC wavefunctions for these large lattices
by using RGN and (for XXZ models) parallel tempering.
For 1-D systems, we compare the estimated ground-state energies against the exact energies in Table \ref{tab:results1}.
For 2-D systems, we report the estimated energies themselves in Table \ref{tab:results2}.

\begin{table}[h!]
    \centering
    \begin{tabular}{r|ccc|}
    \multicolumn{1}{c}{~} & \multicolumn{3}{c}{$200 \times 1$ TFI model} \\
    \cline{2-4}
    $~$ & $h = 0.5$ & $h = 1.0$ & $h = 1.5$ \\
    Natural GD ~ & $3.9 \times 10^{-5}$
    & $1.4 \times 10^{-4}$  & $8.5 \times 10^{-8}$  \\
    RGN ~ & $\bm{1.0 \times 10^{-9}}$
    & $\bm{2.9 \times 10^{-6}}$ & $\bm{1.6 \times 10^{-9}}$ \\
    \cline{2-4}
    \multicolumn{4}{c}{~}
    \\
    \multicolumn{1}{c}{~} & \multicolumn{3}{c}{$100 \times 1$ XXZ model} \\
    \cline{2-4}
    ~ & $\Delta = 0.5$ & 
    $\Delta = 1.0$ & $\Delta = 1.5$ \\
    Natural GD ~ & $3.9 \times 10^{-6}$
    & $1.2 \times 10^{-5}$ & $5.4 \times 10^{-5}$
     \\
    RGN ~ & $\bm{2.5 \times 10^{-7}}$
    & $\bm{3.3 \times 10^{-6}}$ & $\bm{2.0 \times 10^{-5}}$ \\
    \cline{2-4}
    \end{tabular}
    \caption{Relative errors in ground-state energy estimates after $1000$ iterations.
    Lower errors are marked in bold.}
    \label{tab:results1}
\end{table}

\begin{table}[ht!]
    \centering
    \begin{tabular}{r|ccc|}
    \multicolumn{1}{c}{~} & \multicolumn{3}{c}{$20 \times 20$ TFI model, Natural GD} \\
    \cline{2-4}
    ~ & $h = 2.0$ & $h = 3.0$ & $h = 4.0$ \\
    iteration $200$~ & $-2.\bm{3375353}$
    &  $-3.1\bm{899006}$ & $-4.133\bm{7097}$ \\
    iteration $1000$~ & $-2.\bm{5113061}$
    &  $-3.1\bm{950035}$ & $-4.133\bm{8352}$ \\
    \cline{2-4}
    \multicolumn{4}{c}{~}
    \\
    \multicolumn{1}{c}{~} & \multicolumn{3}{c}{$20 \times 20$ TFI model, RGN} \\
    \cline{2-4}
    iteration $200$~ & $-2.51130\bm{56}$
    & $-3.1949\bm{262}$ & $-4.133\bm{5964}$ \\
    iteration $1000$~ & $-2.51130\bm{69}$
    & $-3.1949\bm{974}$ & $-4.133\bm{8354}$ \\
    \cline{2-4}
    \end{tabular}
    \caption{Ground-state energy estimates, normalized by the number of sites.
    Changes between iteration $200$ and iteration $1000$ are marked in bold.}
    \label{tab:results2}
\end{table}

\begin{figure*}[ht!]
    \centering
    \includegraphics[scale=.35]{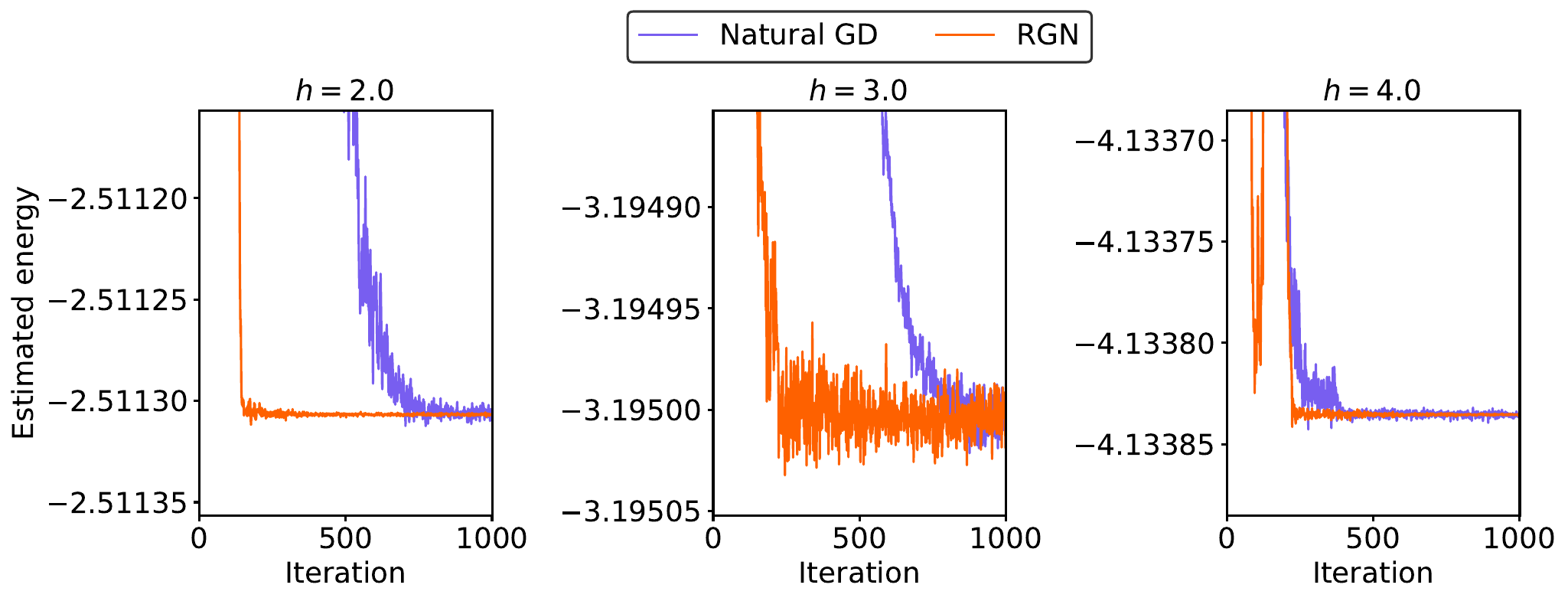}
    \caption{RGN outperforms natural GD in ferromagnetic ($h = 0.5$, left),
    transitional ($h = 1.0$, center),
    and paramagnetic ($h = 1.5$, right) Ising models on a $20 \times 20$ lattice.
    Plot shows ground-state energy estimates normalized by the number of sites.}
    \label{fig:main_result}
\end{figure*}

Summarizing Tables \ref{tab:results1} and \ref{tab:results2}, we find that RGN energies converge more quickly and achieve greater accuracy than natural GD energies.
In 1-D lattices,
RGN is more accurate than natural GD by up to four orders of magnitude, reaching error levels as low as $1.0 \times 10^{-9}$ and $1.6 \times 10^{-9}$.
In 2-D lattices for which exact reference energies are not available,
the energy estimates obtained by RGN are typically lower than those obtained
using natural GD, and the convergence is very fast.
After $200$ iterations, RGN is converged to $4$--$6$ significant digits, whereas natural GD is only converged to $1$--$4$ significant digits.

We further illustrate the comparison between natural GD and RGN
for TFI models
in Figures \ref{fig:main_result} and \ref{fig:main_result_2}.
These figures, showing the complete time series of energy estimates over $1000$ optimization steps, demonstrate that RGN results after $200$ iterations are typically more accurate than natural GD results after $1000$ iterations.
Because RGN is only slightly more expensive than natural GD (less than a factor of two in our experiments),
we conclude that RGN makes it possible to 
obtain accurate ground state estimates with reduced training time and computational cost.

\begin{figure}[h!]
    \centering
    \includegraphics[scale=.35]{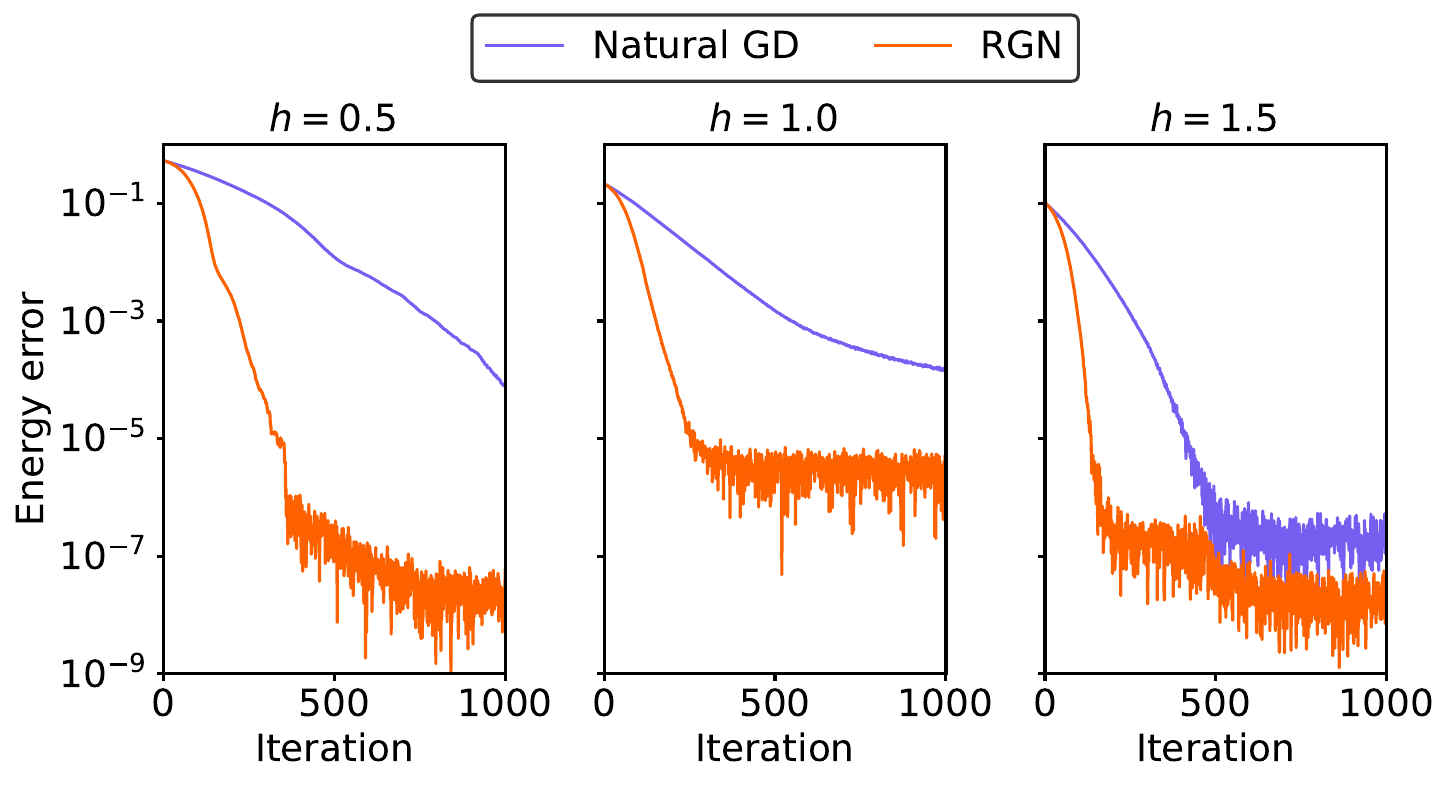}
    \caption{Relative error in ground-state energy estimates for TFI models on a $200 \times 1$ lattice.}
    \label{fig:main_result_2}
\end{figure}



\section{Conclusion}{\label{sec:conclusion}}
This work has analyzed
VMC optimization and sampling methods, leading to both theoretical and computational advancements.
First, we showed that the energy Hessian simplifies dramatically near an eigenstate, depending only on first derivatives of the wavefunction with respect to the parameters.
Taking advantage of this simplification, we introduced a new Rayleigh-Gauss-Newton (RGN) optimizer that can achieve superlinear convergence.
Second, we proved a vanishing-variance property that guarantees VMC energy estimates exhibit reduced variance near an eigenstate.
This principle ensures accuracy in the energies near the ground state but not away from the ground state, so we 
suggested a parallel tempering approach to improve energy and gradient estimation for challenging test problems.

We highlight two opportunities for improving our optimization and sampling methods even further.
First, for very large parametrizations,
the linear system solve in the RGN method becomes numerically challenging. 
To address this difficulty, the Kronecker-factored approximate curvature method for efficient matrix inversion within natural gradient descent \cite{kfac, pfau2020ab}, in addition to the aforementioned matrix-free approach \cite{neuscamman2012optimizing}, could potentially be adapted to RGN.
Second, while parallel tempering is a simple, broadly applicable enhanced sampling method,
there exist a variety of alternative methods \cite{tiwary2016review}.
We anticipate that further analysis of enhanced MCMC sampling
will play an important role in realizing the full potential of VMC in future applications.

\begin{acknowledgments}
RJW and ML would like to acknowledge helpful conversations with Timothy Berkelbach, Aaron Dinner, Sam Greene, Lin Lin, Verena Neufeld, James Smith, Jonathan Siegel,
Erik Thiede, Jonathan Weare, and Huan Zhang.
RJW is supported by New York University's Dean's Dissertation Fellowship and by the National Science Foundation through award DMS-1646339. ML is supported by the National Science Foundation under Award No. 1903031. The authors acknowledge support from the Advanced Scientific
Computing Research Program within the DOE Office of Science through award DE-SC0020427.
Computing resources were provided by New York University's High Performance Computing.
\end{acknowledgments}

\appendix

\section{Proofs}
\label{sec:proofs}

\begin{proof}[Proof of Proposition \ref{prop:rate}]
We prove only the second result. The first result is well-known, and the proof is similar.
Because the wavefunction is analytic at $\bm{\theta}^{\ast}$ and the Wirtinger Hessian
$\big(\begin{smallmatrix}
\bm{H} & \bm{J} \\
\overline{\bm{J}} & \overline{\bm{H}}
\end{smallmatrix}\big)$
is positive definite, as $i \rightarrow \infty$ we find
\begin{equation}
    \bigl(\begin{smallmatrix}
    \bm{g}\left(\bm{\theta}^i\right) \\
    \overline{\bm{g}\left(\bm{\theta}^i\right)}
    \end{smallmatrix}\bigr)
    \sim \bigl(\begin{smallmatrix}
    \bm{H} & \bm{J} \\
    \overline{\bm{J}} & \overline{\bm{H}}
    \end{smallmatrix}\bigr)
    \bigl(\begin{smallmatrix}
    \bm{\theta}^i - \bm{\theta}^{\ast} \\
    \overline{\bm{\theta}^i - \bm{\theta}^{\ast}}
    \end{smallmatrix}\bigr). 
\end{equation}
and
\begin{equation}
    \mathcal{E}\left[\psi_{\bm{\theta}^i}\right]
    - \mathcal{E}\left[\psi_{\bm{\theta}^{\ast}}\right]
    \sim
    \frac{1}{2}
    \Big|
    \bigl(\begin{smallmatrix}
    \bm{H} & \bm{J} \\
    \overline{\bm{J}} & \overline{\bm{H}}
    \end{smallmatrix}\bigr)^{\frac{1}{2}}
    \bigl(\begin{smallmatrix}
    \bm{\theta}^i - \bm{\theta}^{\ast} \\
    \overline{\bm{\theta}^i - \bm{\theta}^{\ast}}
    \end{smallmatrix}\bigr) \Big|^2
\end{equation}
The $i$-th parameter update satisfies
\begin{multline}
    \bigl(
    \begin{smallmatrix}
    \bm{\theta}^{i+1} - \bm{\theta}^{\ast} \\
    \overline{\bm{\theta}^{i+1} - 
    \bm{\theta}^{\ast}}
    \end{smallmatrix}
    \bigr)
    \sim
    \Big[
    \bm{I}
    - \bigl(\begin{smallmatrix}
    \bm{P}_i & \bm{0} \\
    \bm{0} & \overline{\bm{P}_i}
    \end{smallmatrix}\bigr)^{-1}
    \bigl(\begin{smallmatrix}
    \bm{H} & \bm{J} \\
    \overline{\bm{J}} & \overline{\bm{H}}
    \end{smallmatrix}\bigr)\Big]
    \bigl(\begin{smallmatrix}
    \bm{\theta}^i - \bm{\theta}^{\ast} \\
    \overline{\bm{\theta}^i - 
    \bm{\theta}^{\ast}}
    \end{smallmatrix}\bigr) \\
    + \mathcal{O}( | \bm{\theta} - \bm{\theta^{\ast}} |^2 ),
\end{multline}
and the energy ratio satisfies
\begin{multline}
    \frac{\mathcal{E}\left[\psi_{\bm{\theta}^{i+1}}\right]
    - \mathcal{E}\left[\psi_{\bm{\theta}^{\ast}}\right]}{\mathcal{E}\left[\psi_{\bm{\theta}^i}\right]
    - \mathcal{E}\left[\psi_{\bm{\theta}^{\ast}}\right]}
    \sim 
    \\
    \frac{\Big|
    \Big[\bm{I} -
    \bigl(\begin{smallmatrix}
    \bm{H} & \bm{J} \\
    \overline{\bm{J}} & \overline{\bm{H}}
    \end{smallmatrix}\bigr)^{\frac{1}{2}}
    \bigl(\begin{smallmatrix}
    \bm{P}^i & \bm{0} \\
    \bm{0} & \overline{\bm{P}^i}
    \end{smallmatrix}\bigr)^{-1}
    \bigl(\begin{smallmatrix}
    \bm{H} & \bm{J} \\
    \overline{\bm{J}} & \overline{\bm{H}}
    \end{smallmatrix}\bigr)^{\frac{1}{2}}
    \Big]
    \bigl(\begin{smallmatrix}
    \bm{H} & \bm{J} \\
    \overline{\bm{J}} & \overline{\bm{H}}
    \end{smallmatrix}\bigr)^{\frac{1}{2}}
    \big(\begin{smallmatrix}
    \bm{\theta}_i - \bm{\theta}^{\ast} \\
    \overline{\bm{\theta}^i - \bm{\theta}^{\ast}}
    \end{smallmatrix}\big)
    \Big|^2}
    {\Big|
    \bigl(\begin{smallmatrix}
    \bm{H} & \bm{J} \\
    \overline{\bm{J}} & \overline{\bm{H}}
    \end{smallmatrix}\bigr)^{\frac{1}{2}}
    \bigl(\begin{smallmatrix}
    \bm{\theta}_i - \bm{\theta}^{\ast} \\
    \overline{\bm{\theta}^i - 
    \bm{\theta}^{\ast}}
    \end{smallmatrix}\bigr)
    \Big|^2},
\end{multline}
whence
\begin{multline}
    \limsup_{i \rightarrow \infty} \frac{\mathcal{E}\left[\psi_{\bm{\theta}^{i+1}}\right] 
    - \mathcal{E}\left[\psi_{\bm{\theta}^{\ast}}\right]}
    {\mathcal{E}\left[\psi_{\bm{\theta}^{i}}\right] 
    - \mathcal{E}\left[\psi_{\bm{\theta}^{\ast}}\right]} \\
    \leq 
    \limsup_{i \rightarrow \infty} 
    \Big\lVert \bm{I} -
    \bigl(\begin{smallmatrix}
    \bm{H} & \bm{J} \\
    \overline{\bm{J}} & \overline{\bm{H}}
    \end{smallmatrix}\bigr)^{\frac{1}{2}}
    \bigl(\begin{smallmatrix}
    \bm{P}_i & \bm{0} \\
    \bm{0} & \overline{\bm{P}_i}
    \end{smallmatrix}\bigr)^{-1}
    \bigl(\begin{smallmatrix}
    \bm{H} & \bm{J} \\
    \overline{\bm{J}} & \overline{\bm{H}}
    \end{smallmatrix}\bigr)^{\frac{1}{2}}
    \Big\rVert_2^2
\end{multline}
\end{proof}

\begin{proof}[Proof of Proposition \ref{prop:zero_variance}]
    The Markov chain central limit theorem \cite{jones2004markov} guarantees that
    \begin{equation}
        \frac{1}{T} \sum_{t=1}^T
        \begin{pmatrix}
        \overline{\bm{\nu}(\bm{\sigma}_t)} \\
        E_{L}(\bm{\sigma}_t)
        \end{pmatrix} \\
        = \begin{pmatrix}
        \E_{\rho}\!\big[\,\overline{\bm{\nu}(\bm{\sigma})}\,\big] \\
        \mathcal{E}
        \end{pmatrix}
        + \mathcal{O}_p\bigg(\frac{1}{\sqrt{T}}\bigg)
    \end{equation}
    as $T \rightarrow \infty$.
    Next, using the identity
    \begin{multline}
        \big(\bm{x} - \E_{\rho}\!\big[\,\overline{\bm{\nu}(\bm{\sigma})}\,\big]\big)
        \left(y - \mathcal{E}\right) \\
        = 
        \mathcal{O}\big(\big|\bm{x}
        - \E_{\rho}\!\big[\,\overline{\bm{\nu}(\bm{\sigma})}\,\big]\big|^2
        + \left|y - \mathcal{E}\right|^2\big)
    \end{multline}
    and substituting the empirical averages $\bm{x} = \frac{1}{T} \sum_{t=1}^T 
    \overline{\bm{\nu}(\bm{\sigma}_t)}$
    and $y = \frac{1}{T} \sum_{t=1}^T 
    E_{L}(\bm{\sigma}_t)$,
    we obtain
    \begin{align}
        & \hat{\bm{g}}_T \\
        = &\frac{1}{T} \sum_{t=1}^T 
        \overline{\bm{\nu}(\bm{\sigma}_t)}
        E_{L}(\bm{\sigma}_t)
        - \frac{1}{T^2} \sum_{s, t=1}^T 
        \overline{\bm{\nu}(\bm{\sigma}_s)}
        E_{L}(\bm{\sigma}_t) \\
        = &\frac{1}{T}
        \sum_{t=1}^T \bm{g}^{\prime}(\bm{\sigma}_t) + \mathcal{O}_p\Big(\frac{1}{T}\Big).
    \end{align}
    Slutsky's lemma shows that the $\mathcal{O}_p(\frac{1}{T})$ term is asymptotically negligible, and another application of the Markov chain central limit theorem guarantees
    \begin{align}
        & \sqrt{T} \big(\,\hat{\mathcal{E}}_T - \mathcal{E}\big)
        \stackrel{\mathcal{D}}{\rightarrow}
        \mathcal{N}\left(0, v^2\right), \\
        & \sqrt{T}\big(\,\hat{\bm{g}}_T - \bm{g}\big) \stackrel{\mathcal{D}}{\rightarrow} \mathcal{N}\left(\bm{0}, \bm{\Sigma}\right),
    \end{align}
    where the asymptotic variances $v^2$ and $\bm{\Sigma}$ are as given in \eqref{eq:var1} and \eqref{eq:var2}.
\end{proof}

\section{Computations}
\label{sec:computations}

Here, we discuss the computational details for our experiments.
These computations are implemented using the JAX library for Python \cite{jax2018github}, and complete scripts and output are available on github \cite{robert_j_webber_2021_4989655}.
Using these scripts, estimating the ground-state wavefunction for a large lattice is relatively fast, requiring less than four days on a single 48-core CPU node (see Table \ref{tab:runtimes}).
The resulting energies are presented in Table \ref{tab:results1} for TFI models and Table \ref{tab:results2} for XXZ models.

\begin{table}[H]
    \centering
    \begin{tabular}{l|l|l}
        ~ & RGN & Natural GD \\
        \hline
        TFI $200 \times 1$~ &
        $18$--$21$ hrs~ & $12$--$14$ hrs~ \\
        TFI $20 \times 20$~ &
        $58$--$63$ hrs~ & $30$--$32$ hrs~ \\
        XXZ $100 \times 1$~ & $97$--$100$ hrs~ & $85$--$90$ hrs~
    \end{tabular}
    \caption{Runtimes per $1000$ optimization steps on a single $48$-core CPU node, with $2400 \times 20$ MCMC samples per optimization step.}
    \label{tab:runtimes}
\end{table}

We initialize our neural network wavefunction parameters as independent complex-valued $\mathcal{N}\left(0, 0.001\right)$ random variables,
using a random seed of $123$.
We then update our parameters using GD, natural GD, LM, or RGN over $1000$ iterations, as
described in Sections \ref{sub:gradient} and \ref{sub:quasi}.
During the optimizations, 
we increase the penalization parameter $\epsilon$ from $\epsilon = \epsilon_{\textup{min}}$ to $\epsilon = \epsilon_{\textup{max}}$
and increase $\eta$ from $\eta = \eta_{\textup{min}}$ to $\eta = \eta_{\textup{max}}$ at a geometric rate over $\tau$ iterations.
Our specific choices of parameters $\epsilon_{\min}$, $\epsilon_{\max}$, $\eta_{\min}$, $\eta_{\max}$, and $\tau$ are detailed below in Table \ref{tab:penalization}.

\begin{table}[h!]
    \centering
    \begin{tabular}{c|p{.6\linewidth}}
         $\epsilon_{\min}$ ~ & $0.001$  \\
         $\epsilon_{\max}$ ~ & $0.01$ for GD and natural GD,
         $1$ for LM, $1000$ for RGN \\
         $\eta_{\min}$ ~ & $0.001$ \\
         $\eta_{\max}$ ~ & $0.001$ for natural GD, $0.1$ for RGN \\
         $\tau$ ~ & $100$ for deterministic updates, $500$ for stochastic updates
    \end{tabular}
    \caption{Penalization parameters}
    \label{tab:penalization}
\end{table}

Before evaluating the wavefunction $\psi(\bm{\sigma})$ or
wavefunction derivative
$\psi_i(\bm{\sigma})$,
we check whether $\bm{\sigma}$ has `mostly negative' magnetization, as defined by
\begin{equation}
\label{eq:mostly_neg}
    2 \sum_i \bm{\sigma}_i + \bm{\sigma}_1 < 0.
\end{equation}
If we encounter a configuration $\bm{\sigma}$ 
for which \eqref{eq:mostly_neg} is not satisfied, we transform it to $-\bm{\sigma}$.
Indeed, the mostly negative configurations suffice to determine the complete wavefunction given the symmetry condition
$\psi(\bm{\sigma}) = \psi\left(-\bm{\sigma}\right)$,
and VMC can lead to low-quality ground-state wavefunction estimates when this symmetry condition is not enforced.

For 1-D and 2-D lattices, we can evaluate the log wavefunction and its derivatives in $\mathcal{O}\left(\alpha n \log n\right)$ operations using
the discrete Fourier transform $\mathcal{F}$ and its inverse $\mathcal{F}^{-1}$.
To show this, we write
\begin{equation}
\label{eq:dft1}
    \log \psi_{\bm{w},\bm{b}} (\bm{\sigma}) = \sum_{i=1}^{\alpha} \sum_j \log \cosh \bm{\theta}_{ij},
\end{equation}
where we have introduced angles
\begin{equation}
\label{eq:dft2}
    \bm{\theta}_{ij}= \left(\mathcal{F}^{-1} \left( \mathcal{F} \bm{w}_{i\cdot} \odot \overline{\mathcal{F} \bm{\sigma}} \right)\right)_j + \bm{b}_i
\end{equation}
and we have used $\odot$ to represent element-wise multiplication.
Similarly, we write
\begin{align}
    & \frac{\partial \log \psi_{\bm{w},\bm{b}}}{\partial \bm{b}_i} (\bm{\sigma})
    = \sum_j \tanh \bm{\theta}_{ij}, \\
    & \frac{\partial \log \psi_{\bm{w},\bm{b}}}{\partial \bm{w}_{ij}} (\bm{\sigma})
    = \left(\mathcal{F}^{-1}\left(\mathcal{F} \left(\tanh \bm{\theta}_{i \cdot}\right)
    \odot \mathcal{F} \bm{\sigma} \right)\right)_j.
\end{align}

When optimizing VMC wavefunctions,
we occasionally encounter a sudden increase in the norm of the parameter updates, here defined as a factor of two or greater.
When such a large update occurs,
in addition to immediately restricting the size of the parameter update (by decreasing $\epsilon$),
we restore $\epsilon = \epsilon_{\textup{min}}$ and $\eta = \eta_{\textup{min}}$
and restart the geometric progression.

Lastly, to obtain the energy estimates reported in Tables \ref{tab:results1} and \ref{tab:results2},
we run the MCMC chains for an additional $2000 \times n$ time steps
and evaluate the local energies at intervals of $n$ time steps.

\bibliography{references}

\begin{thebibliography}{41}%
\makeatletter
\providecommand \@ifxundefined [1]{%
 \@ifx{#1\undefined}
}%
\providecommand \@ifnum [1]{%
 \ifnum #1\expandafter \@firstoftwo
 \else \expandafter \@secondoftwo
 \fi
}%
\providecommand \@ifx [1]{%
 \ifx #1\expandafter \@firstoftwo
 \else \expandafter \@secondoftwo
 \fi
}%
\providecommand \natexlab [1]{#1}%
\providecommand \enquote  [1]{``#1''}%
\providecommand \bibnamefont  [1]{#1}%
\providecommand \bibfnamefont [1]{#1}%
\providecommand \citenamefont [1]{#1}%
\providecommand \href@noop [0]{\@secondoftwo}%
\providecommand \href [0]{\begingroup \@sanitize@url \@href}%
\providecommand \@href[1]{\@@startlink{#1}\@@href}%
\providecommand \@@href[1]{\endgroup#1\@@endlink}%
\providecommand \@sanitize@url [0]{\catcode `\\12\catcode `\$12\catcode
  `\&12\catcode `\#12\catcode `\^12\catcode `\_12\catcode `\%12\relax}%
\providecommand \@@startlink[1]{}%
\providecommand \@@endlink[0]{}%
\providecommand \url  [0]{\begingroup\@sanitize@url \@url }%
\providecommand \@url [1]{\endgroup\@href {#1}{\urlprefix }}%
\providecommand \urlprefix  [0]{URL }%
\providecommand \Eprint [0]{\href }%
\providecommand \doibase [0]{https://doi.org/}%
\providecommand \selectlanguage [0]{\@gobble}%
\providecommand \bibinfo  [0]{\@secondoftwo}%
\providecommand \bibfield  [0]{\@secondoftwo}%
\providecommand \translation [1]{[#1]}%
\providecommand \BibitemOpen [0]{}%
\providecommand \bibitemStop [0]{}%
\providecommand \bibitemNoStop [0]{.\EOS\space}%
\providecommand \EOS [0]{\spacefactor3000\relax}%
\providecommand \BibitemShut  [1]{\csname bibitem#1\endcsname}%
\let\auto@bib@innerbib\@empty
\bibitem [{\citenamefont {Gubernatis}\ \emph {et~al.}(2016)\citenamefont
  {Gubernatis}, \citenamefont {Kawashima},\ and\ \citenamefont
  {Werner}}]{gubernatis2016quantum}%
  \BibitemOpen
  \bibfield  {author} {\bibinfo {author} {\bibfnamefont {J.}~\bibnamefont
  {Gubernatis}}, \bibinfo {author} {\bibfnamefont {N.}~\bibnamefont
  {Kawashima}},\ and\ \bibinfo {author} {\bibfnamefont {P.}~\bibnamefont
  {Werner}},\ }\href@noop {} {\emph {\bibinfo {title} {Quantum {M}onte {C}arlo
  Methods}}}\ (\bibinfo  {publisher} {Cambridge University Press},\ \bibinfo
  {year} {2016})\BibitemShut {NoStop}%
\bibitem [{\citenamefont {Becca}\ and\ \citenamefont
  {Sorella}(2017)}]{becca2017quantum}%
  \BibitemOpen
  \bibfield  {author} {\bibinfo {author} {\bibfnamefont {F.}~\bibnamefont
  {Becca}}\ and\ \bibinfo {author} {\bibfnamefont {S.}~\bibnamefont
  {Sorella}},\ }\href@noop {} {\emph {\bibinfo {title} {Quantum Monte Carlo
  Approaches for Correlated Systems}}}\ (\bibinfo  {publisher} {Cambridge
  University Press},\ \bibinfo {year} {2017})\BibitemShut {NoStop}%
\bibitem [{\citenamefont {Carleo}\ and\ \citenamefont
  {Troyer}(2017)}]{carleo2017solving}%
  \BibitemOpen
  \bibfield  {author} {\bibinfo {author} {\bibfnamefont {G.}~\bibnamefont
  {Carleo}}\ and\ \bibinfo {author} {\bibfnamefont {M.}~\bibnamefont
  {Troyer}},\ }\bibfield  {title} {\bibinfo {title} {Solving the quantum
  many-body problem with artificial neural networks},\ }\href@noop {}
  {\bibfield  {journal} {\bibinfo  {journal} {Science}\ }\textbf {\bibinfo
  {volume} {355}},\ \bibinfo {pages} {602} (\bibinfo {year}
  {2017})}\BibitemShut {NoStop}%
\bibitem [{\citenamefont {Luo}\ and\ \citenamefont
  {Clark}(2019)}]{luo2019backflow}%
  \BibitemOpen
  \bibfield  {author} {\bibinfo {author} {\bibfnamefont {D.}~\bibnamefont
  {Luo}}\ and\ \bibinfo {author} {\bibfnamefont {B.~K.}\ \bibnamefont
  {Clark}},\ }\bibfield  {title} {\bibinfo {title} {Backflow transformations
  via neural networks for quantum many-body wave functions},\ }\href@noop {}
  {\bibfield  {journal} {\bibinfo  {journal} {Physical Review Letters}\
  }\textbf {\bibinfo {volume} {122}},\ \bibinfo {pages} {226401} (\bibinfo
  {year} {2019})}\BibitemShut {NoStop}%
\bibitem [{\citenamefont {Pfau}\ \emph {et~al.}(2020)\citenamefont {Pfau},
  \citenamefont {Spencer}, \citenamefont {Matthews},\ and\ \citenamefont
  {Foulkes}}]{pfau2020ab}%
  \BibitemOpen
  \bibfield  {author} {\bibinfo {author} {\bibfnamefont {D.}~\bibnamefont
  {Pfau}}, \bibinfo {author} {\bibfnamefont {J.~S.}\ \bibnamefont {Spencer}},
  \bibinfo {author} {\bibfnamefont {A.~G.}\ \bibnamefont {Matthews}},\ and\
  \bibinfo {author} {\bibfnamefont {W.~M.~C.}\ \bibnamefont {Foulkes}},\
  }\bibfield  {title} {\bibinfo {title} {Ab initio solution of the
  many-electron {S}chr{\"o}dinger equation with deep neural networks},\
  }\href@noop {} {\bibfield  {journal} {\bibinfo  {journal} {Physical Review
  Research}\ }\textbf {\bibinfo {volume} {2}},\ \bibinfo {pages} {033429}
  (\bibinfo {year} {2020})}\BibitemShut {NoStop}%
\bibitem [{\citenamefont {Hermann}\ \emph {et~al.}(2020)\citenamefont
  {Hermann}, \citenamefont {Sch{\"a}tzle},\ and\ \citenamefont
  {No{\'e}}}]{hermann2020deep}%
  \BibitemOpen
  \bibfield  {author} {\bibinfo {author} {\bibfnamefont {J.}~\bibnamefont
  {Hermann}}, \bibinfo {author} {\bibfnamefont {Z.}~\bibnamefont
  {Sch{\"a}tzle}},\ and\ \bibinfo {author} {\bibfnamefont {F.}~\bibnamefont
  {No{\'e}}},\ }\bibfield  {title} {\bibinfo {title} {Deep-neural-network
  solution of the electronic {S}chr{\"o}dinger equation},\ }\href@noop {}
  {\bibfield  {journal} {\bibinfo  {journal} {Nature Chemistry}\ ,\ \bibinfo
  {pages} {1}} (\bibinfo {year} {2020})}\BibitemShut {NoStop}%
\bibitem [{\citenamefont {Spencer}\ \emph {et~al.}(2020)\citenamefont
  {Spencer}, \citenamefont {Pfau}, \citenamefont {Botev},\ and\ \citenamefont
  {Foulkes}}]{spencer2020better}%
  \BibitemOpen
  \bibfield  {author} {\bibinfo {author} {\bibfnamefont {J.~S.}\ \bibnamefont
  {Spencer}}, \bibinfo {author} {\bibfnamefont {D.}~\bibnamefont {Pfau}},
  \bibinfo {author} {\bibfnamefont {A.}~\bibnamefont {Botev}},\ and\ \bibinfo
  {author} {\bibfnamefont {W.~M.~C.}\ \bibnamefont {Foulkes}},\ }\bibfield
  {title} {\bibinfo {title} {Better, faster fermionic neural networks},\
  }\href@noop {} {\bibfield  {journal} {\bibinfo  {journal} {arXiv preprint
  arXiv:2011.07125}\ } (\bibinfo {year} {2020})}\BibitemShut {NoStop}%
\bibitem [{\citenamefont {Sharir}\ \emph {et~al.}(2020)\citenamefont {Sharir},
  \citenamefont {Levine}, \citenamefont {Wies}, \citenamefont {Carleo},\ and\
  \citenamefont {Shashua}}]{sharir2020deep}%
  \BibitemOpen
  \bibfield  {author} {\bibinfo {author} {\bibfnamefont {O.}~\bibnamefont
  {Sharir}}, \bibinfo {author} {\bibfnamefont {Y.}~\bibnamefont {Levine}},
  \bibinfo {author} {\bibfnamefont {N.}~\bibnamefont {Wies}}, \bibinfo {author}
  {\bibfnamefont {G.}~\bibnamefont {Carleo}},\ and\ \bibinfo {author}
  {\bibfnamefont {A.}~\bibnamefont {Shashua}},\ }\bibfield  {title} {\bibinfo
  {title} {Deep autoregressive models for the efficient variational simulation
  of many-body quantum systems},\ }\href@noop {} {\bibfield  {journal}
  {\bibinfo  {journal} {Physical Review Letters}\ }\textbf {\bibinfo {volume}
  {124}},\ \bibinfo {pages} {020503} (\bibinfo {year} {2020})}\BibitemShut
  {NoStop}%
\bibitem [{\citenamefont {Yang}\ \emph
  {et~al.}(2020{\natexlab{a}})\citenamefont {Yang}, \citenamefont {Leng},
  \citenamefont {Yu}, \citenamefont {Patel}, \citenamefont {Hu},\ and\
  \citenamefont {Pu}}]{yang2020deep}%
  \BibitemOpen
  \bibfield  {author} {\bibinfo {author} {\bibfnamefont {L.}~\bibnamefont
  {Yang}}, \bibinfo {author} {\bibfnamefont {Z.}~\bibnamefont {Leng}}, \bibinfo
  {author} {\bibfnamefont {G.}~\bibnamefont {Yu}}, \bibinfo {author}
  {\bibfnamefont {A.}~\bibnamefont {Patel}}, \bibinfo {author} {\bibfnamefont
  {W.-J.}\ \bibnamefont {Hu}},\ and\ \bibinfo {author} {\bibfnamefont
  {H.}~\bibnamefont {Pu}},\ }\bibfield  {title} {\bibinfo {title} {Deep
  learning-enhanced variational {M}onte {C}arlo method for quantum many-body
  physics},\ }\href@noop {} {\bibfield  {journal} {\bibinfo  {journal}
  {Physical Review Research}\ }\textbf {\bibinfo {volume} {2}},\ \bibinfo
  {pages} {012039} (\bibinfo {year} {2020}{\natexlab{a}})}\BibitemShut
  {NoStop}%
\bibitem [{\citenamefont {Yang}\ \emph
  {et~al.}(2020{\natexlab{b}})\citenamefont {Yang}, \citenamefont {Hu},\ and\
  \citenamefont {Li}}]{yang2020scalable}%
  \BibitemOpen
  \bibfield  {author} {\bibinfo {author} {\bibfnamefont {L.}~\bibnamefont
  {Yang}}, \bibinfo {author} {\bibfnamefont {W.}~\bibnamefont {Hu}},\ and\
  \bibinfo {author} {\bibfnamefont {L.}~\bibnamefont {Li}},\ }\bibfield
  {title} {\bibinfo {title} {Scalable variational {M}onte {C}arlo with graph
  neural ansatz},\ }\href@noop {} {\bibfield  {journal} {\bibinfo  {journal}
  {arXiv preprint arXiv:2011.12453}\ } (\bibinfo {year}
  {2020}{\natexlab{b}})}\BibitemShut {NoStop}%
\bibitem [{\citenamefont {Park}\ and\ \citenamefont
  {Kastoryano}(2020)}]{park2020geometry}%
  \BibitemOpen
  \bibfield  {author} {\bibinfo {author} {\bibfnamefont {C.-Y.}\ \bibnamefont
  {Park}}\ and\ \bibinfo {author} {\bibfnamefont {M.~J.}\ \bibnamefont
  {Kastoryano}},\ }\bibfield  {title} {\bibinfo {title} {Geometry of learning
  neural quantum states},\ }\href@noop {} {\bibfield  {journal} {\bibinfo
  {journal} {Physical Review Research}\ }\textbf {\bibinfo {volume} {2}},\
  \bibinfo {pages} {023232} (\bibinfo {year} {2020})}\BibitemShut {NoStop}%
\bibitem [{\citenamefont {Swendsen}\ and\ \citenamefont
  {Wang}(1986)}]{swendsen1986replica}%
  \BibitemOpen
  \bibfield  {author} {\bibinfo {author} {\bibfnamefont {R.~H.}\ \bibnamefont
  {Swendsen}}\ and\ \bibinfo {author} {\bibfnamefont {J.-S.}\ \bibnamefont
  {Wang}},\ }\bibfield  {title} {\bibinfo {title} {Replica {M}onte {C}arlo
  simulation of spin-glasses},\ }\href@noop {} {\bibfield  {journal} {\bibinfo
  {journal} {Physical Review Letters}\ }\textbf {\bibinfo {volume} {57}},\
  \bibinfo {pages} {2607} (\bibinfo {year} {1986})}\BibitemShut {NoStop}%
\bibitem [{\citenamefont {Sachdev}(2009)}]{Sachdev_2009}%
  \BibitemOpen
  \bibfield  {author} {\bibinfo {author} {\bibfnamefont {S.}~\bibnamefont
  {Sachdev}},\ }\href {https://doi.org/10.1017/cbo9780511973765} {\emph
  {\bibinfo {title} {Quantum Phase Transitions}}}\ (\bibinfo  {publisher}
  {Cambridge University Press},\ \bibinfo {year} {2009})\BibitemShut {NoStop}%
\bibitem [{\citenamefont {Szabados}(2016)}]{szabados2016perturbation}%
  \BibitemOpen
  \bibfield  {author} {\bibinfo {author} {\bibfnamefont {{\'A}.}~\bibnamefont
  {Szabados}},\ }\bibfield  {title} {\bibinfo {title} {Perturbation
  theory--time independent aspects of the theory applied in molecular
  electronic structure description.},\ }\href@noop {} {\bibfield  {journal}
  {\bibinfo  {journal} {Elsevier Reference Module in Chemistry, Molecular
  Sciences and Chemical Engineering}\ } (\bibinfo {year} {2016})}\BibitemShut
  {NoStop}%
\bibitem [{\citenamefont {Wirtinger}(1927)}]{wirtinger1927formalen}%
  \BibitemOpen
  \bibfield  {author} {\bibinfo {author} {\bibfnamefont {W.}~\bibnamefont
  {Wirtinger}},\ }\bibfield  {title} {\bibinfo {title} {Zur formalen theorie
  der funktionen von mehr komplexen ver{\"a}nderlichen},\ }\href@noop {}
  {\bibfield  {journal} {\bibinfo  {journal} {Mathematische Annalen}\ }\textbf
  {\bibinfo {volume} {97}},\ \bibinfo {pages} {357} (\bibinfo {year}
  {1927})}\BibitemShut {NoStop}%
\bibitem [{\citenamefont {Schreier}\ and\ \citenamefont
  {Scharf}(2010)}]{schreier_scharf_2010}%
  \BibitemOpen
  \bibfield  {author} {\bibinfo {author} {\bibfnamefont {P.~J.}\ \bibnamefont
  {Schreier}}\ and\ \bibinfo {author} {\bibfnamefont {L.~L.}\ \bibnamefont
  {Scharf}},\ }\bibinfo {title} {Complex differential calculus ({W}irtinger
  calculus)},\ in\ \href {https://doi.org/10.1017/CBO9780511815911.014} {\emph
  {\bibinfo {booktitle} {Statistical Signal Processing of Complex-Valued Data:
  The Theory of Improper and Noncircular Signals}}}\ (\bibinfo  {publisher}
  {Cambridge University Press},\ \bibinfo {year} {2010})\ pp.\ \bibinfo {pages}
  {277--286}\BibitemShut {NoStop}%
\bibitem [{\citenamefont {Sorella}(2001)}]{sorella2001generalized}%
  \BibitemOpen
  \bibfield  {author} {\bibinfo {author} {\bibfnamefont {S.}~\bibnamefont
  {Sorella}},\ }\bibfield  {title} {\bibinfo {title} {Generalized {L}anczos
  algorithm for variational quantum {M}onte {C}arlo},\ }\href@noop {}
  {\bibfield  {journal} {\bibinfo  {journal} {Physical Review B}\ }\textbf
  {\bibinfo {volume} {64}},\ \bibinfo {pages} {024512} (\bibinfo {year}
  {2001})}\BibitemShut {NoStop}%
\bibitem [{\citenamefont {Sorella}\ \emph {et~al.}(2007)\citenamefont
  {Sorella}, \citenamefont {Casula},\ and\ \citenamefont
  {Rocca}}]{sorella2007weak}%
  \BibitemOpen
  \bibfield  {author} {\bibinfo {author} {\bibfnamefont {S.}~\bibnamefont
  {Sorella}}, \bibinfo {author} {\bibfnamefont {M.}~\bibnamefont {Casula}},\
  and\ \bibinfo {author} {\bibfnamefont {D.}~\bibnamefont {Rocca}},\ }\bibfield
   {title} {\bibinfo {title} {Weak binding between two aromatic rings:
  {F}eeling the van der {W}aals attraction by quantum {M}onte {C}arlo
  methods},\ }\href@noop {} {\bibfield  {journal} {\bibinfo  {journal} {The
  Journal of Chemical Physics}\ }\textbf {\bibinfo {volume} {127}},\ \bibinfo
  {pages} {014105} (\bibinfo {year} {2007})}\BibitemShut {NoStop}%
\bibitem [{\citenamefont {Stokes}\ \emph {et~al.}(2020)\citenamefont {Stokes},
  \citenamefont {Izaac}, \citenamefont {Killoran},\ and\ \citenamefont
  {Carleo}}]{stokes2020quantum}%
  \BibitemOpen
  \bibfield  {author} {\bibinfo {author} {\bibfnamefont {J.}~\bibnamefont
  {Stokes}}, \bibinfo {author} {\bibfnamefont {J.}~\bibnamefont {Izaac}},
  \bibinfo {author} {\bibfnamefont {N.}~\bibnamefont {Killoran}},\ and\
  \bibinfo {author} {\bibfnamefont {G.}~\bibnamefont {Carleo}},\ }\bibfield
  {title} {\bibinfo {title} {Quantum natural gradient},\ }\href@noop {}
  {\bibfield  {journal} {\bibinfo  {journal} {Quantum}\ }\textbf {\bibinfo
  {volume} {4}},\ \bibinfo {pages} {269} (\bibinfo {year} {2020})}\BibitemShut
  {NoStop}%
\bibitem [{\citenamefont {Kingma}\ and\ \citenamefont
  {Ba}(2015)}]{DBLP:journals/corr/KingmaB14}%
  \BibitemOpen
  \bibfield  {author} {\bibinfo {author} {\bibfnamefont {D.~P.}\ \bibnamefont
  {Kingma}}\ and\ \bibinfo {author} {\bibfnamefont {J.}~\bibnamefont {Ba}},\
  }\bibfield  {title} {\bibinfo {title} {Adam: {A} method for stochastic
  optimization},\ }in\ \href {http://arxiv.org/abs/1412.6980} {\emph {\bibinfo
  {booktitle} {3rd International Conference on Learning Representations, {ICLR}
  2015, San Diego, CA, USA, May 7-9, 2015, Conference Track Proceedings}}},\
  \bibinfo {editor} {edited by\ \bibinfo {editor} {\bibfnamefont
  {Y.}~\bibnamefont {Bengio}}\ and\ \bibinfo {editor} {\bibfnamefont
  {Y.}~\bibnamefont {LeCun}}}\ (\bibinfo {year} {2015})\BibitemShut {NoStop}%
\bibitem [{\citenamefont {Reddi}\ \emph {et~al.}(2018)\citenamefont {Reddi},
  \citenamefont {Kale},\ and\ \citenamefont {Kumar}}]{j.2018on}%
  \BibitemOpen
  \bibfield  {author} {\bibinfo {author} {\bibfnamefont {S.~J.}\ \bibnamefont
  {Reddi}}, \bibinfo {author} {\bibfnamefont {S.}~\bibnamefont {Kale}},\ and\
  \bibinfo {author} {\bibfnamefont {S.}~\bibnamefont {Kumar}},\ }\bibfield
  {title} {\bibinfo {title} {On the convergence of {A}dam and beyond},\ }in\
  \href {https://openreview.net/forum?id=ryQu7f-RZ} {\emph {\bibinfo
  {booktitle} {International Conference on Learning Representations}}}\
  (\bibinfo {year} {2018})\BibitemShut {NoStop}%
\bibitem [{\citenamefont {Sabzevari}\ \emph {et~al.}(2020)\citenamefont
  {Sabzevari}, \citenamefont {Mahajan},\ and\ \citenamefont
  {Sharma}}]{sabzevari2020accelerated}%
  \BibitemOpen
  \bibfield  {author} {\bibinfo {author} {\bibfnamefont {I.}~\bibnamefont
  {Sabzevari}}, \bibinfo {author} {\bibfnamefont {A.}~\bibnamefont {Mahajan}},\
  and\ \bibinfo {author} {\bibfnamefont {S.}~\bibnamefont {Sharma}},\
  }\bibfield  {title} {\bibinfo {title} {An accelerated linear method for
  optimizing non-linear wavefunctions in variational {M}onte {C}arlo},\
  }\href@noop {} {\bibfield  {journal} {\bibinfo  {journal} {The Journal of
  Chemical Physics}\ }\textbf {\bibinfo {volume} {152}},\ \bibinfo {pages}
  {024111} (\bibinfo {year} {2020})}\BibitemShut {NoStop}%
\bibitem [{\citenamefont {Hibat-Allah}\ \emph {et~al.}(2020)\citenamefont
  {Hibat-Allah}, \citenamefont {Ganahl}, \citenamefont {Hayward}, \citenamefont
  {Melko},\ and\ \citenamefont {Carrasquilla}}]{hibat2020recurrent}%
  \BibitemOpen
  \bibfield  {author} {\bibinfo {author} {\bibfnamefont {M.}~\bibnamefont
  {Hibat-Allah}}, \bibinfo {author} {\bibfnamefont {M.}~\bibnamefont {Ganahl}},
  \bibinfo {author} {\bibfnamefont {L.~E.}\ \bibnamefont {Hayward}}, \bibinfo
  {author} {\bibfnamefont {R.~G.}\ \bibnamefont {Melko}},\ and\ \bibinfo
  {author} {\bibfnamefont {J.}~\bibnamefont {Carrasquilla}},\ }\bibfield
  {title} {\bibinfo {title} {Recurrent neural network wave functions},\
  }\href@noop {} {\bibfield  {journal} {\bibinfo  {journal} {Physical Review
  Research}\ }\textbf {\bibinfo {volume} {2}},\ \bibinfo {pages} {023358}
  (\bibinfo {year} {2020})}\BibitemShut {NoStop}%
\bibitem [{\citenamefont {Westerhout}\ \emph {et~al.}(2020)\citenamefont
  {Westerhout}, \citenamefont {Astrakhantsev}, \citenamefont {Tikhonov},
  \citenamefont {Katsnelson},\ and\ \citenamefont
  {Bagrov}}]{westerhout2020generalization}%
  \BibitemOpen
  \bibfield  {author} {\bibinfo {author} {\bibfnamefont {T.}~\bibnamefont
  {Westerhout}}, \bibinfo {author} {\bibfnamefont {N.}~\bibnamefont
  {Astrakhantsev}}, \bibinfo {author} {\bibfnamefont {K.~S.}\ \bibnamefont
  {Tikhonov}}, \bibinfo {author} {\bibfnamefont {M.~I.}\ \bibnamefont
  {Katsnelson}},\ and\ \bibinfo {author} {\bibfnamefont {A.~A.}\ \bibnamefont
  {Bagrov}},\ }\bibfield  {title} {\bibinfo {title} {Generalization properties
  of neural network approximations to frustrated magnet ground states},\
  }\href@noop {} {\bibfield  {journal} {\bibinfo  {journal} {Nature
  Communications}\ }\textbf {\bibinfo {volume} {11}},\ \bibinfo {pages} {1}
  (\bibinfo {year} {2020})}\BibitemShut {NoStop}%
\bibitem [{\citenamefont {Wierichs}\ \emph {et~al.}(2020)\citenamefont
  {Wierichs}, \citenamefont {Gogolin},\ and\ \citenamefont
  {Kastoryano}}]{wierichs2020avoiding}%
  \BibitemOpen
  \bibfield  {author} {\bibinfo {author} {\bibfnamefont {D.}~\bibnamefont
  {Wierichs}}, \bibinfo {author} {\bibfnamefont {C.}~\bibnamefont {Gogolin}},\
  and\ \bibinfo {author} {\bibfnamefont {M.}~\bibnamefont {Kastoryano}},\
  }\bibfield  {title} {\bibinfo {title} {Avoiding local minima in variational
  quantum eigensolvers with the natural gradient optimizer},\ }\href@noop {}
  {\bibfield  {journal} {\bibinfo  {journal} {Physical Review Research}\
  }\textbf {\bibinfo {volume} {2}},\ \bibinfo {pages} {043246} (\bibinfo {year}
  {2020})}\BibitemShut {NoStop}%
\bibitem [{\citenamefont {Nocedal}\ and\ \citenamefont
  {Wright}(2006)}]{nocedal2006numerical}%
  \BibitemOpen
  \bibfield  {author} {\bibinfo {author} {\bibfnamefont {J.}~\bibnamefont
  {Nocedal}}\ and\ \bibinfo {author} {\bibfnamefont {S.}~\bibnamefont
  {Wright}},\ }\href@noop {} {\emph {\bibinfo {title} {Numerical
  Optimization}}}\ (\bibinfo  {publisher} {Springer Science \& Business
  Media},\ \bibinfo {year} {2006})\BibitemShut {NoStop}%
\bibitem [{\citenamefont {Cuzzocrea}\ \emph {et~al.}(2020)\citenamefont
  {Cuzzocrea}, \citenamefont {Scemama}, \citenamefont {Briels}, \citenamefont
  {Moroni},\ and\ \citenamefont {Filippi}}]{cuzzocrea2020variational}%
  \BibitemOpen
  \bibfield  {author} {\bibinfo {author} {\bibfnamefont {A.}~\bibnamefont
  {Cuzzocrea}}, \bibinfo {author} {\bibfnamefont {A.}~\bibnamefont {Scemama}},
  \bibinfo {author} {\bibfnamefont {W.~J.}\ \bibnamefont {Briels}}, \bibinfo
  {author} {\bibfnamefont {S.}~\bibnamefont {Moroni}},\ and\ \bibinfo {author}
  {\bibfnamefont {C.}~\bibnamefont {Filippi}},\ }\bibfield  {title} {\bibinfo
  {title} {Variational principles in quantum {M}onte {C}arlo: The troubled
  story of variance minimization},\ }\href@noop {} {\bibfield  {journal}
  {\bibinfo  {journal} {Journal of Chemical Theory and Computation}\ }\textbf
  {\bibinfo {volume} {16}},\ \bibinfo {pages} {4203} (\bibinfo {year}
  {2020})}\BibitemShut {NoStop}%
\bibitem [{\citenamefont {Neuscamman}\ \emph {et~al.}(2012)\citenamefont
  {Neuscamman}, \citenamefont {Umrigar},\ and\ \citenamefont
  {Chan}}]{neuscamman2012optimizing}%
  \BibitemOpen
  \bibfield  {author} {\bibinfo {author} {\bibfnamefont {E.}~\bibnamefont
  {Neuscamman}}, \bibinfo {author} {\bibfnamefont {C.}~\bibnamefont
  {Umrigar}},\ and\ \bibinfo {author} {\bibfnamefont {G.~K.-L.}\ \bibnamefont
  {Chan}},\ }\bibfield  {title} {\bibinfo {title} {Optimizing large parameter
  sets in variational quantum {M}onte {C}arlo},\ }\href@noop {} {\bibfield
  {journal} {\bibinfo  {journal} {Physical Review B}\ }\textbf {\bibinfo
  {volume} {85}},\ \bibinfo {pages} {045103} (\bibinfo {year}
  {2012})}\BibitemShut {NoStop}%
\bibitem [{\citenamefont {Zhao}\ and\ \citenamefont
  {Neuscamman}(2017)}]{zhao2017blocked}%
  \BibitemOpen
  \bibfield  {author} {\bibinfo {author} {\bibfnamefont {L.}~\bibnamefont
  {Zhao}}\ and\ \bibinfo {author} {\bibfnamefont {E.}~\bibnamefont
  {Neuscamman}},\ }\bibfield  {title} {\bibinfo {title} {A blocked linear
  method for optimizing large parameter sets in variational {M}onte {C}arlo},\
  }\href@noop {} {\bibfield  {journal} {\bibinfo  {journal} {Journal of
  Chemical Theory and Computation}\ }\textbf {\bibinfo {volume} {13}},\
  \bibinfo {pages} {2604} (\bibinfo {year} {2017})}\BibitemShut {NoStop}%
\bibitem [{\citenamefont {Liu}(2008)}]{liu2008monte}%
  \BibitemOpen
  \bibfield  {author} {\bibinfo {author} {\bibfnamefont {J.~S.}\ \bibnamefont
  {Liu}},\ }\href@noop {} {\emph {\bibinfo {title} {Monte Carlo Strategies in
  Scientific Computing}}}\ (\bibinfo  {publisher} {Springer Science \& Business
  Media},\ \bibinfo {year} {2008})\BibitemShut {NoStop}%
\bibitem [{\citenamefont {Metropolis}\ \emph {et~al.}(1953)\citenamefont
  {Metropolis}, \citenamefont {Rosenbluth}, \citenamefont {Rosenbluth},
  \citenamefont {Teller},\ and\ \citenamefont
  {Teller}}]{metropolis1953equation}%
  \BibitemOpen
  \bibfield  {author} {\bibinfo {author} {\bibfnamefont {N.}~\bibnamefont
  {Metropolis}}, \bibinfo {author} {\bibfnamefont {A.~W.}\ \bibnamefont
  {Rosenbluth}}, \bibinfo {author} {\bibfnamefont {M.~N.}\ \bibnamefont
  {Rosenbluth}}, \bibinfo {author} {\bibfnamefont {A.~H.}\ \bibnamefont
  {Teller}},\ and\ \bibinfo {author} {\bibfnamefont {E.}~\bibnamefont
  {Teller}},\ }\bibfield  {title} {\bibinfo {title} {Equation of state
  calculations by fast computing machines},\ }\href@noop {} {\bibfield
  {journal} {\bibinfo  {journal} {The Journal of Chemical Physics}\ }\textbf
  {\bibinfo {volume} {21}},\ \bibinfo {pages} {1087} (\bibinfo {year}
  {1953})}\BibitemShut {NoStop}%
\bibitem [{\citenamefont {Choo}\ \emph {et~al.}(2018)\citenamefont {Choo},
  \citenamefont {Carleo}, \citenamefont {Regnault},\ and\ \citenamefont
  {Neupert}}]{choo2018symmetries}%
  \BibitemOpen
  \bibfield  {author} {\bibinfo {author} {\bibfnamefont {K.}~\bibnamefont
  {Choo}}, \bibinfo {author} {\bibfnamefont {G.}~\bibnamefont {Carleo}},
  \bibinfo {author} {\bibfnamefont {N.}~\bibnamefont {Regnault}},\ and\
  \bibinfo {author} {\bibfnamefont {T.}~\bibnamefont {Neupert}},\ }\bibfield
  {title} {\bibinfo {title} {Symmetries and many-body excitations with
  neural-network quantum states},\ }\href@noop {} {\bibfield  {journal}
  {\bibinfo  {journal} {Physical Review Letters}\ }\textbf {\bibinfo {volume}
  {121}},\ \bibinfo {pages} {167204} (\bibinfo {year} {2018})}\BibitemShut
  {NoStop}%
\bibitem [{\citenamefont {Bethe}(1931)}]{bethe1931theorie}%
  \BibitemOpen
  \bibfield  {author} {\bibinfo {author} {\bibfnamefont {H.}~\bibnamefont
  {Bethe}},\ }\bibfield  {title} {\bibinfo {title} {Zur theorie der metalle},\
  }\href@noop {} {\bibfield  {journal} {\bibinfo  {journal} {Zeitschrift
  f{\"u}r Physik}\ }\textbf {\bibinfo {volume} {71}},\ \bibinfo {pages} {205}
  (\bibinfo {year} {1931})}\BibitemShut {NoStop}%
\bibitem [{\citenamefont {Pfeuty}(1970)}]{pfeuty1970one}%
  \BibitemOpen
  \bibfield  {author} {\bibinfo {author} {\bibfnamefont {P.}~\bibnamefont
  {Pfeuty}},\ }\bibfield  {title} {\bibinfo {title} {The one-dimensional
  {I}sing model with a transverse field},\ }\href@noop {} {\bibfield  {journal}
  {\bibinfo  {journal} {Annals of Physics}\ }\textbf {\bibinfo {volume} {57}},\
  \bibinfo {pages} {79} (\bibinfo {year} {1970})}\BibitemShut {NoStop}%
\bibitem [{\citenamefont {Levin}\ and\ \citenamefont
  {Peres}(2017)}]{levin2017markov}%
  \BibitemOpen
  \bibfield  {author} {\bibinfo {author} {\bibfnamefont {D.~A.}\ \bibnamefont
  {Levin}}\ and\ \bibinfo {author} {\bibfnamefont {Y.}~\bibnamefont {Peres}},\
  }\href@noop {} {\emph {\bibinfo {title} {Markov Chains and Mixing Times}}},\
  Vol.\ \bibinfo {volume} {107}\ (\bibinfo  {publisher} {American Mathematical
  Soc.},\ \bibinfo {year} {2017})\BibitemShut {NoStop}%
\bibitem [{\citenamefont {Diaconis}(1988)}]{diaconis1988group}%
  \BibitemOpen
  \bibfield  {author} {\bibinfo {author} {\bibfnamefont {P.}~\bibnamefont
  {Diaconis}},\ }\href {http://projecteuclid.org/euclid.lnms/1215467407} {\emph
  {\bibinfo {title} {Group Representations in Probability and Statistics}}},\
  Institute of Mathematical Statistics Lecture Notes---Monograph Series, 11\
  (\bibinfo  {publisher} {Institute of Mathematical Statistics},\ \bibinfo
  {address} {Hayward, CA},\ \bibinfo {year} {1988})\BibitemShut {NoStop}%
\bibitem [{\citenamefont {Martens}\ and\ \citenamefont {Grosse}(2015)}]{kfac}%
  \BibitemOpen
  \bibfield  {author} {\bibinfo {author} {\bibfnamefont {J.}~\bibnamefont
  {Martens}}\ and\ \bibinfo {author} {\bibfnamefont {R.}~\bibnamefont
  {Grosse}},\ }\bibfield  {title} {\bibinfo {title} {Optimizing neural networks
  with {K}ronecker-factored approximate curvature},\ }in\ \href@noop {} {\emph
  {\bibinfo {booktitle} {Proceedings of the 32nd International Conference on
  International Conference on Machine}}}\ (\bibinfo {year} {2015})\BibitemShut
  {NoStop}%
\bibitem [{\citenamefont {Tiwary}\ and\ \citenamefont {van~de
  Walle}(2016)}]{tiwary2016review}%
  \BibitemOpen
  \bibfield  {author} {\bibinfo {author} {\bibfnamefont {P.}~\bibnamefont
  {Tiwary}}\ and\ \bibinfo {author} {\bibfnamefont {A.}~\bibnamefont {van~de
  Walle}},\ }\bibfield  {title} {\bibinfo {title} {A review of enhanced
  sampling approaches for accelerated molecular dynamics},\ }\href@noop {}
  {\bibfield  {journal} {\bibinfo  {journal} {Multiscale Materials Modeling for
  Nanomechanics}\ ,\ \bibinfo {pages} {195}} (\bibinfo {year}
  {2016})}\BibitemShut {NoStop}%
\bibitem [{\citenamefont {Jones}\ \emph {et~al.}(2004)\citenamefont {Jones}
  \emph {et~al.}}]{jones2004markov}%
  \BibitemOpen
  \bibfield  {author} {\bibinfo {author} {\bibfnamefont {G.~L.}\ \bibnamefont
  {Jones}} \emph {et~al.},\ }\bibfield  {title} {\bibinfo {title} {On the
  {M}arkov chain central limit theorem},\ }\href@noop {} {\bibfield  {journal}
  {\bibinfo  {journal} {Probability Surveys}\ }\textbf {\bibinfo {volume}
  {1}},\ \bibinfo {pages} {299} (\bibinfo {year} {2004})}\BibitemShut {NoStop}%
\bibitem [{\citenamefont {Bradbury}\ \emph {et~al.}(2018)\citenamefont
  {Bradbury}, \citenamefont {Frostig}, \citenamefont {Hawkins}, \citenamefont
  {Johnson}, \citenamefont {Leary}, \citenamefont {Maclaurin}, \citenamefont
  {Necula}, \citenamefont {Paszke}, \citenamefont {Vander{P}las}, \citenamefont
  {Wanderman-{M}ilne},\ and\ \citenamefont {Zhang}}]{jax2018github}%
  \BibitemOpen
  \bibfield  {author} {\bibinfo {author} {\bibfnamefont {J.}~\bibnamefont
  {Bradbury}}, \bibinfo {author} {\bibfnamefont {R.}~\bibnamefont {Frostig}},
  \bibinfo {author} {\bibfnamefont {P.}~\bibnamefont {Hawkins}}, \bibinfo
  {author} {\bibfnamefont {M.~J.}\ \bibnamefont {Johnson}}, \bibinfo {author}
  {\bibfnamefont {C.}~\bibnamefont {Leary}}, \bibinfo {author} {\bibfnamefont
  {D.}~\bibnamefont {Maclaurin}}, \bibinfo {author} {\bibfnamefont
  {G.}~\bibnamefont {Necula}}, \bibinfo {author} {\bibfnamefont
  {A.}~\bibnamefont {Paszke}}, \bibinfo {author} {\bibfnamefont
  {J.}~\bibnamefont {Vander{P}las}}, \bibinfo {author} {\bibfnamefont
  {S.}~\bibnamefont {Wanderman-{M}ilne}},\ and\ \bibinfo {author}
  {\bibfnamefont {Q.}~\bibnamefont {Zhang}},\ }\href
  {http://github.com/google/jax} {\bibinfo {title} {{JAX}: {C}omposable
  transformations of {P}ython+{N}um{P}y programs}} (\bibinfo {year}
  {2018})\BibitemShut {NoStop}%
\bibitem [{\citenamefont {Webber}(2021)}]{robert_j_webber_2021_4989655}%
  \BibitemOpen
  \bibfield  {author} {\bibinfo {author} {\bibfnamefont {R.~J.}\ \bibnamefont
  {Webber}},\ }\href {https://doi.org/10.5281/zenodo.4989655} {\bibinfo {title}
  {{RGN} optimization}},\ \bibinfo {howpublished}
  {\url{https://github.com/rjwebber/rgn_optimization}} (\bibinfo {year}
  {2021})\BibitemShut {NoStop}%
\end{thebibliography}%


%

\end{document}